\newtheorem{theorem}{Theorem}
\newtheorem{definition}{Definition}
\newtheorem{proposition}{Proposition}
\newtheorem{corollary}{Corollary}
\newtheorem{lemma}{Lemma}
\newtheorem{example}{Example}
\newdefinition{remark}{Remark}
\newproof{proof}{Proof}
\begin{document}

\begin{frontmatter}

\title{Geometric lattice structure of covering and its application to attribute reduction through matroids}
\author{Aiping Huang}
\author{William Zhu}
\ead{williamfengzhu@gmail.com}
\address{Lab of Granular Computing,\\
Minnan Normal University, Zhangzhou 363000, China}

\begin{abstract}
The reduction of covering decision systems is an important problem in data mining, and covering-based rough sets serve as an efficient technique to process the problem.
Geometric lattices have been widely used in many fields, especially greedy algorithm design which plays an important role in the reduction problems.
Therefore, it is meaningful to combine coverings with geometric lattices to solve the optimization problems.
In this paper, we obtain geometric lattices from coverings through matroids and then apply them to the issue of attribute reduction.
First, a geometric lattice structure of a covering is constructed through transversal matroids.
Then its atoms are studied and used to describe the lattice.
Second, considering that all the closed sets of a finite matroid form a geometric lattice, we propose a dependence space through matroids and study the attribute reduction issues of the space, which realizes the application of geometric lattices to attribute reduction.
Furthermore, a special type of information system is taken as an example to illustrate the application.
In a word, this work points out an interesting view, namely, geometric lattice to study the attribute reduction issues of information systems.
\end{abstract}

\begin{keyword}
Geometric lattice, Covering, Matroid, Dependence space, Attribute reduction.
\end{keyword}

\end{frontmatter}

%%%%%%%%%%%%%%%%%%%%%%%%%%%%%%%%%%%%
%%%%%%%%%%%%%%%%%%%%%%%%%%%%%%%%%%%%
\section{Introduction}
Rough set theory~\cite{Pawlak82Rough}, based on equivalence relations, was proposed by Pawlak to deal with the vagueness and incompleteness of knowledge in information systems.
It has been widely applied to many practical applications in various areas, such as attribute reductions~\cite{FanZhu12Attribute,MinHeQianZhu11Test,WeiLiZhang07Knowledge}, rule extractions~\cite{DuHuZhuMa11Rule} and so on.
In order to extend rough set theory's applications, some scholars have extended the theory to generalized rough set theory based on tolerance relation~\cite{YaoWangZhang10Onfuzzy}, similarity relation~\cite{SlowinskiVanderpooten00AGeneralized} and arbitrary binary relation~\cite{LiuZhu08TheAlgebraic,Zhu09RelationshipBetween}.
Through extending a partition to a covering, rough set theory has been generalized to covering-based rough set theory~\cite{zakowski83Approximations,Zhu09RelationshipBetween}.
Because of its high efficiency in many complicated problems such as knowledge reduction and rule learning in incomplete information system, covering-based rough set theory has been attracting increasing research interest~\cite{QinGaoPei07OnCovering,WangZhuZhu10Structure,WangZhu12Quantitative,Zhu11Covering,Zhu07Topological,ZhuWang03Reduction,WangChenHeHu12A,WangChenSunHu12Communication}.

A lattice is suggested by the form of the Hasse diagram depicting it.
In mathematics, a lattice is a partially ordered set in which any two elements have a unique supremum (also called a least upper bound or join) and a unique infimum (also called a greatest lower bound or meet). They encode the algebraic behavior of the entailment relation and such basic logical connectives as ``and" (conjunction) and ``or"(disjunction), which results in adequate algebraic semantics for a variety of logical systems.
Lattices, especially geometric lattices, are important algebraic structures and are used extensively in both theoretical and
applicable fields, such as rough sets~\cite{EstajiHooshmandaslDavva12Rough,HuangZhu12Geometric}, formal concept analysis~\cite{WangLiuCao10ANew,Wille82Restructuring,YaoChen04Rough} and domain theory~\cite{Birhoff95Lattice,DaveyPriestley90Introduction}.

Matroid theory~\cite{Lai01Matroid,Oxley93Matroid} borrows extensively from linear algebra and graph theory.
There are dozens of equivalent ways to characterize a matroid.
Significant definitions of a matroid include those in terms of independent sets, bases, circuits, closed sets or flats and rank functions, which provides well-established platforms to connect with other theories.
In applications, matroids have been widely used in many fields such as combinatorial optimization, network flows and algorithm design, especially greedy algorithm design~\cite{Edmonds71Matroids,Lawler01Combinatorialoptimization}.
Studying rough sets with matroids is helpful to enrich the theory system and to extend the applications of rough sets.
Some works on the connection between rough sets and matroids have been conducted~\cite{HuangZhu13Dependence,HuangZhaoZhu13Nullity,LiLiu12Matroidal,TangSheZhu12Matroidal,TangSheMinZhu13AMatroidal,WangZhuMin11Transversal,
WangZhu11Matroidal,WangZhuZhuMin12Matroidal,ZhuWang11Matroidal,ZhuWang13Rough}.

In this paper, we pay our attention to geometric lattice structures of coverings and their applications to attribute reduction issues of information systems.
First, a geometric lattice of a covering is constructed through the transversal matroid induced by the covering.
Then its atoms are studied and used to characterize the lattice structure.
It is interesting that any element of the lattice can be expressed as the union of all closures of single-point sets in the element.
Second, we apply the obtained geometric lattice to attribute reduction issues in information systems.
It is interesting that a subset of a finite nonempty set is a reduct of the information system if and only if it is a minimal set with respect to the property of containing an element from each nonempty complement of any coatom of the lattice.

The rest of this paper is organized as follows.
In Section \ref{Preliminaries}, we recall some fundamental concepts related to rough sets, lattices and matroids.
Section \ref{S:Geometriclatticestructureofcoveringthroughmatroids} presents a geometric lattice of a covering, and characterizes the structure by the atoms of the lattice.
In Section \ref{S:Applicationofgeometriclatticeinattributereduction}, we apply the obtained geometric lattices to the attribute reduction issues of information systems.
Finally, this paper is concluded and further work is pointed out in Section \ref{S:conclusions}.

%%%%%%%%%%%%%%%%%%%%%%%%%%%%%%%%%%%%%%%%%%%%%%%%%%%%%%%%%%%%%%%%%%%%%%%%%%%%%%%
\section{Preliminaries}
\label{Preliminaries}
%%%%%%%%%%%%%%%%%%%%%%%%%%%%%%%%%%%%%%%%%%%%%%%%%%%%%%%%%%%%%%%%%%%%%%%%%%%%%%%

In this section, we review some basic concepts of rough sets, matroids, and geometric lattices.

%%%%%%%%%%%%%%%%%%%%%
\subsection{Rough sets}
%%%%%%%%%%%%%%%%%%%%%

Rough set theory is a new mathematical tool for imprecise and incomplete data analysis.
It uses equivalence relations (resp.partitions) to describe the knowledge we can master.
In this subsection, we introduce some concepts of rough sets used in this paper.

\begin{definition}(Covering and partition)
Let $U$ be a universe, $\mathcal{C}$ a family of subsets of $U$.
If none of subsets in $\mathcal{C}$ are empty and $\bigcup \mathcal{C} = U$, then $\mathcal{C}$ is called a covering of $U$.
The element of $\mathcal{C}$ is called a covering block.
If $\mathcal{P}$ is a covering of $U$ and it is a family of pairwise disjoint subsets of $U$, then $P$ is called a partition of $U$.
\end{definition}

It is clear that a partition is certainly a covering, so the concept of a covering is an extension of the concept of a partition.

\begin{definition}(Approximation operators~\cite{Pawlak82Rough})
Let $U$ be a finite set and $R$ be an equivalent relation (reflexive, symmetric and transitive) on $U$.
$\forall X \subseteq U$, the lower and upper approximations of $X$, are, respectively, defined as follows:
\begin{center}
~~~~~~~~~~~~~~~~~$R_{\ast}(X) =\{x \in U: [x]_{R} \subseteq X\}$,\\
~~~~~~~~~~~~~~~~~~~~$R^{\ast}(X) = \{x \in U: [x]_{R} \bigcap X \neq \emptyset\}$,
\end{center}
where $[x]_{R}$ is called the equivalence class of $x$ with respect to $R$.
\end{definition}

%%%%%%%%%%%%%%%%%%%%%
\subsection{Matroids}
%%%%%%%%%%%%%%%%%%%%%

Matroid theory borrows extensively from the terminology of linear algebra and graph theory,
largely because it is the abstraction of various notions of central importance in these fields, such as independent sets, bases and the rank function.

\begin{definition}(Matroid~\cite{Oxley93Matroid})
A matroid is an ordered pair $(U, \mathcal{I})$ consisting of a finite set $U$ and a collection $\mathcal{I}$ of subsets of $U$ satisfying the following three conditions:\\
(I1) $\emptyset \in \mathcal{I}$.\\
(I2) If $I \in \mathcal{I}$ and $I^{'} \subseteq I$, then $I^{'} \in \mathcal{I}$.\\
(I3) If $I_{1}, I_{2} \in \mathcal{I}$ and $|I_{1}| < |I_{2}|$, then there is an element $e \in I_{2} - I_{1}$ such that $I_{1} \bigcup \{e\} \in \mathcal{I}$, where $|X|$ denotes the cardinality of $X$.
\end{definition}

Let $M = (U, \mathcal{I})$ be a matroid.
The members of $\mathcal{I}$ are the independent sets of $M$.
A set in $\mathcal{I}$ is maximal, in the sense of inclusion, is called a base of $M$.
If $X \notin \mathcal{I}$, $X$ is called a dependent set of $M$.
In the sense of inclusion, a minimal dependent subset of $U$ is called a circuit of $M$.
The collections of the bases and the circuits of matroid $M$ are denoted by $\mathcal{B}(M)$ and $\mathcal{C}(M)$, respectively.
The rank function of matroid $M$ is a function $r_{M}: 2^{U} \rightarrow N$ defined by $r_{M}(X) = max\{|I|: I \subseteq X, I \in \mathcal{I}\}$, where $X \subseteq U$.
For each $X \subseteq U$, we say $cl_{M}(X) = \{a \in U: r_{M}(X) = r_{M}(X \bigcup \{a\})\}$ is the closure of $X$ in $M$.
If $cl_{M}(X) = X$, $X$ is called a closed set of $M$.
For any $X \subseteq U$, if $cl_{M}(X) = X$ and $r_{M}(X) = r_{M}(U) - 1$, then $X$ is called a hyperplane in $M$.
The rank function of a matroid, directly analogous to a similar theorem of linear algebra, has the following proposition.

\begin{proposition}(Rank axiom~\cite{Oxley93Matroid})
\label{P:Rankaxiom}
Let $U$ be a set. A function $r:2^{U} \rightarrow N$ is the rank function of a matroid on $U$ if and only if it satisfies the following conditions:\\
(R1) For all $X \subseteq U$, $0 \leq r(X) \leq |X|$.\\
(R2) If $X \subseteq Y \subseteq U$, then $r(X) \leq  r(Y)$.\\
(R3) If $X, Y \subseteq U$, then $r(X \bigcup Y) + r(X \bigcap Y)\leq r(X) + r(Y)$.
\end{proposition}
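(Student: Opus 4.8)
The statement is a biconditional, so the plan is to prove the two directions separately: first that every matroid rank function satisfies (R1)--(R3), and then that any function $r:2^{U}\rightarrow N$ obeying (R1)--(R3) is the rank function of a matroid on $U$. For the forward direction I would fix a matroid $M=(U,\mathcal{I})$ and argue directly from $r_{M}(X)=\max\{|I|:I\subseteq X, I\in\mathcal{I}\}$. Condition (R1) is immediate, since $\emptyset\in\mathcal{I}$ by (I1) keeps the maximand nonempty so $r_{M}(X)\geq 0$, while every independent $I\subseteq X$ has $|I|\leq|X|$; and (R2) holds because every independent subset of $X$ is also an independent subset of $Y$ when $X\subseteq Y$. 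The only genuine work is the submodular inequality (R3). For this I would take a maximal independent subset $I$ of $X\bigcap Y$ and, using (I3) repeatedly, extend it to a maximal independent subset $J$ of $X\bigcup Y$, so that $|I|=r_{M}(X\bigcap Y)$ and $|J|=r_{M}(X\bigcup Y)$. Maximality of $I$ inside $X\bigcap Y$ forces $J\bigcap X\bigcap Y=I$, while $J\bigcap X$ and $J\bigcap Y$ are independent subsets of $X$ and $Y$. Since $J\subseteq X\bigcup Y$, the inclusion--exclusion count $|J|=|J\bigcap X|+|J\bigcap Y|-|J\bigcap X\bigcap Y|$ gives $r_{M}(X\bigcup Y)=|J\bigcap X|+|J\bigcap Y|-|I|\leq r_{M}(X)+r_{M}(Y)-r_{M}(X\bigcap Y)$, which rearranges to (R3).

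For the converse, given $r$ satisfying (R1)--(R3), I would set $\mathcal{I}=\{X\subseteq U:r(X)=|X|\}$ and verify that $(U,\mathcal{I})$ is a matroid whose rank function is $r$. Axiom (I1) follows from (R1) evaluated at $\emptyset$. The key tool is the unit-increase bound $r(X\bigcup\{e\})\leq r(X)+1$, obtained from (R3) applied to the pair $X$ and $\{e\}$ together with $r(\{e\})\leq 1$; iterating it yields $r(Y)\leq r(X)+|Y\setminus X|$ whenever $X\subseteq Y$, and this immediately delivers (I2). For (I3) I would argue by contradiction: if no element of $I_{2}\setminus I_{1}$ extends $I_{1}$ within $\mathcal{I}$, then monotonicity and $r(I_{1})=|I_{1}|$ force $r(I_{1}\bigcup\{e\})=r(I_{1})$ for each such $e$, and an induction on submodularity shows that adjoining an entire set of rank-preserving elements preserves rank, so $r(I_{1}\bigcup I_{2})=r(I_{1})=|I_{1}|$; but (R2) gives $r(I_{1}\bigcup I_{2})\geq r(I_{2})=|I_{2}|>|I_{1}|$, a contradiction. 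Finally, applying the same rank-preservation principle to a maximal independent subset of an arbitrary $X$ shows that the rank function of $(U,\mathcal{I})$ coincides with $r$, closing the argument.

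The step I expect to be the main obstacle is the auxiliary lemma invoked twice in the converse: that if every single-element extension of a set by members of some $S$ leaves the rank unchanged, then $r(X\bigcup S)=r(X)$. Proving it cleanly requires an induction on $|S|$ in which (R3) is applied to the pair $X\bigcup S'$ and $X\bigcup\{e\}$, using that their intersection contains $X$ so its rank is bounded below by $r(X)$ through (R2); one then has to force the resulting chain of inequalities to collapse to equalities. The remaining verifications are routine cardinality bookkeeping, but this lemma is precisely where all three axioms must be made to cooperate, and it is the delicate point of the whole proof.
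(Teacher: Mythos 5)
The paper gives no proof of this proposition: it is stated in the Preliminaries as a known result cited from Oxley's \emph{Matroid Theory}, so there is no in-paper argument to compare against. Your proof is correct and is essentially the standard textbook argument from the cited source — submodularity obtained by extending a maximal independent subset of $X \bigcap Y$ to a maximal independent subset of $X \bigcup Y$ and counting by inclusion--exclusion, and the converse obtained by setting $\mathcal{I}=\{X\subseteq U: r(X)=|X|\}$, deriving the unit-increase bound from (R3), and using the rank-preservation lemma (rank unchanged under adjoining a set of rank-preserving elements) both to verify (I3) by contradiction and to identify the rank function of the resulting matroid with $r$.
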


The following proposition is the closure axiom of a matroid.
It means that an operator satisfies the following four conditions if and only if it is the closure operator of a matroid.

\begin{proposition}(Closure axiom~\cite{Oxley93Matroid})
\label{P:Closureaxiom}
Let $U$ be a set. A function $cl:2^{U} \rightarrow 2^{U}$ is the closure operator of a matroid on $U$ if and only if it satisfies the following conditions:\\
(1) If $X \subseteq U$, then $X\subseteq cl(X)$.\\
(2) If $X \subseteq Y \subseteq U$, then $cl(X) \subseteq cl(Y)$.\\
(3) If $X \subseteq U$, $cl(cl(X))=cl(X)$.\\
(4) If $X \subseteq U, x \in U$ and $y \in cl(X \bigcup \{x\}) - cl(X)$, then $x \in cl(X \bigcup \{y\})$.
\end{proposition}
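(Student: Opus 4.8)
The plan is to prove the two implications of the biconditional separately: first that the closure operator of any matroid satisfies (1)--(4) (necessity), and then that any operator satisfying (1)--(4) arises as the closure operator of some matroid on $U$ (sufficiency). Throughout I would rely on the rank-function characterization in Proposition~\ref{P:Rankaxiom} and on the defining identity $cl_M(X) = \{a \in U : r_M(X) = r_M(X \cup \{a\})\}$.

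For necessity, suppose $cl = cl_M$ for a matroid $M$ with rank $r = r_M$. Condition (1) is immediate since $X \cup \{a\} = X$ for $a \in X$. For (2), given $a \in cl(X)$ and $X \subseteq Y$, I would apply submodularity (R3) to the pair $X \cup \{a\}$ and $Y$ together with monotonicity (R2) to force $r(Y \cup \{a\}) = r(Y)$. Condition (3) follows once one checks $r(cl(X)) = r(X)$, obtained by adjoining the elements of $cl(X) \setminus X$ one at a time and invoking (2). Condition (4) is the only place needing care: writing $r(X) = k$, the hypothesis $y \notin cl(X)$ forces $r(X \cup \{y\}) = k+1$, while $y \in cl(X \cup \{x\})$ gives $r(X \cup \{x,y\}) = r(X \cup \{x\})$; a short case analysis on whether $r(X \cup \{x\})$ equals $k$ or $k+1$ (the former contradicting $r(X \cup \{y\}) = k+1$ by monotonicity) yields $r(X \cup \{x,y\}) = r(X \cup \{y\})$, i.e. $x \in cl(X \cup \{y\})$.

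For sufficiency, assume $cl$ satisfies (1)--(4) and define $\mathcal{I} = \{I \subseteq U : e \notin cl(I \setminus \{e\}) \text{ for every } e \in I\}$; the goal is to show $(U,\mathcal{I})$ is a matroid whose closure operator is $cl$. Axiom (I1) holds vacuously and (I2) follows directly from monotonicity (2). The engine of the argument is the claim that if $I \in \mathcal{I}$ and $x \notin cl(I)$ then $I \cup \{x\} \in \mathcal{I}$: the condition at $x$ is the hypothesis, while for $e \in I$ one assumes $e \in cl((I \setminus \{e\}) \cup \{x\})$, notes $e \notin cl(I \setminus \{e\})$ since $I \in \mathcal{I}$, and applies the exchange property (4) with $X = I \setminus \{e\}$ to deduce $x \in cl(I)$, a contradiction.

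The main obstacle is the augmentation axiom (I3). Given $I_1, I_2 \in \mathcal{I}$ with $|I_1| < |I_2|$, if some $e \in I_2$ lies outside $cl(I_1)$ then the claim immediately produces the required augmentation $I_1 \cup \{e\} \in \mathcal{I}$, so the difficulty reduces to excluding the case $I_2 \subseteq cl(I_1)$. To rule this out I would prove the counting lemma that an independent set contained in $cl(I)$ has cardinality at most $|I|$, by induction leveraging the exchange property, which contradicts $|I_2| > |I_1|$. Finally, to identify the closure operator of the constructed matroid with $cl$, I would take a maximal independent subset (basis) $B$ of a given $X$, use the contrapositive of the claim to show $X \subseteq cl(B)$ and hence $cl(X) = cl(B)$, and observe that $a \in cl_M(X)$ iff $B \cup \{a\}$ is dependent iff $a \in cl(B)$, with the counting lemma guaranteeing that $r_M(X) = |B|$ is well defined. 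I expect the inductive counting lemma underlying (I3) to be the most delicate part, since it is precisely where the exchange axiom must be exploited to recover the uniform cardinality of bases.
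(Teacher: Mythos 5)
This proposition is one of the paper's preliminaries, quoted verbatim from Oxley's \emph{Matroid Theory}; the paper gives no proof of it, so there is no in-paper argument to compare against. Your proposal is, in substance, the classical textbook proof, and it is correct. The necessity direction via the rank axioms checks out: your submodularity computation for (2), the ``adjoin one element at a time'' reduction for (3), and the two-case rank analysis for (4) are all sound. For sufficiency, your definition $\mathcal{I} = \{I \subseteq U : e \notin cl(I \setminus \{e\}) \ \forall e \in I\}$ is exactly the standard construction, your ``engine'' claim (that $I \in \mathcal{I}$ and $x \notin cl(I)$ imply $I \cup \{x\} \in \mathcal{I}$) is argued correctly from axiom (4), and the reduction of (I3) to the case $I_{2} \subseteq cl(I_{1})$ is right.

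The only schematic point is the counting lemma, which you correctly flag as the crux: you assert it can be proved ``by induction leveraging the exchange property'' without giving the induction. For completeness, the standard way to close it is induction on $|X - J|$ where $J \in \mathcal{I}$ and $J \subseteq cl(X)$: in the base case $X \subseteq J$, any $y \in J - X$ would satisfy $y \in cl(X) \subseteq cl(J \setminus \{y\})$ by (2), contradicting $J \in \mathcal{I}$, so $J = X$; in the inductive step, either $J \subseteq cl(X \setminus \{x\})$ for some $x \in X - J$ (then induct on the smaller set), or there is $y \in J$ with $y \in cl(X) - cl(X \setminus \{x\})$, and axiom (4) lets you swap $x$ for $y$, producing $X' = (X \setminus \{x\}) \cup \{y\}$ with $|X'| = |X|$, $J \subseteq cl(X) \subseteq cl(X')$, and $|X' - J| < |X - J|$. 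This replacement argument also underwrites your final identification $cl_{M} = cl$, since it makes $r_{M}(X) = |B|$ independent of the chosen maximal independent $B \subseteq X$. One last remark: the statement as quoted says ``let $U$ be a set,'' but your counting and maximality arguments (and the paper's own definition of a matroid) require $U$ finite; it is worth saying so explicitly.
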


Transversal theory is a branch of matroids. It shows how to induce a matroid, namely, transversal matroid from a family of subsets of a set.
Hence, transversal matroids establish a bridge between a collection of subsets of a set and a matroid.

\begin{definition}(Transversal~\cite{Oxley93Matroid})
Let $S$ be a nonempty finite set and $J = \{1, 2, \cdots, m\}$.
$\mathcal{F}= \{F_{1}, F_{2}, \cdots, F_{m}\}$ denotes a family of subsets of $S$.
A transversal or system of distinct representatives of $\{F_{1}, F_{2}, \cdots, F_{m}\}$ is a subset $\{e_{1}, e_{2}, \cdots, e_{m}\}$ of $S$ such that $e_{i} \in F_{i}$ for all $i \in J$.
If for a subset $K$ of $J$, $X$ is a transversal of $\{F_{i}: i \in K\}$, then $X$ is called a partial transversal of $\{F_{1}, F_{2}, \cdots, F_{m}\}$.
\end{definition}

\begin{example}
\label{E:example1}
Let $S = \{1, 2, 3, 4, 5\}$, $F_{1} = \{1, 3\}, F_{2} = \{2, 3\}$ and $F_{3} = \{3, 4\}$.
For $\mathcal{F} = \{F_{1}, F_{2}, F_{3}\}$, $T=\{2, 3, 4\}$ is a transversal of $\mathcal{F}$ because $2 \in F_{2}$, $3 \in F_{1}$ and $4 \in F_{3}$. $T^{'} = \{2, 4\}$ is a partial transversal of $\mathcal{F}$ because there exists a subset of $\mathcal{F}$, i.e., $\{F_{2}, F_{3}\}$, such that $T^{'}$ is a transversal of it.
\end{example}

The following proposition shows what kind of matroid is a transversal matroid.

\begin{proposition}(Transversal matroid~\cite{Oxley93Matroid})
\label{P:Transversalmatroid}
Let $\mathcal{F} = \{F_{i}: i \in J\}$ be a family of subsets of $U$.
$M(\mathcal{F}) = (U, \mathcal{I}(\mathcal{F}))$ is a matroid, where $\mathcal{I}(\mathcal{F})$ is the family of all partial transversals of $\mathcal{F}$.
We call $M(\mathcal{F}) = (U,\mathcal{I}(\mathcal{F}))$ the transversal matroid induced by $\mathcal{F}$.
\end{proposition}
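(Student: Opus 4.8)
The plan is to translate the notion of partial transversal into the language of bipartite matchings and then verify the three independence axioms (I1)--(I3) directly. First I would associate with $\mathcal{F} = \{F_i : i \in J\}$ the bipartite graph $G$ on vertex classes $U$ and $J$, joining $u \in U$ to $i \in J$ precisely when $u \in F_i$. Under this correspondence a subset $X \subseteq U$ is a partial transversal of $\mathcal{F}$ if and only if $G$ contains a matching that saturates every vertex of $X$ (that is, $X$ is \emph{matchable} into $J$): a system of distinct representatives for the subfamily indexed by $K \subseteq J$ is exactly a matching covering $X$ whose other endpoints lie in $K$. Establishing this dictionary is routine and reduces the proposition to showing that the matchable subsets of $U$ form the independent sets of a matroid.

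The axioms (I1) and (I2) are then immediate. The empty set is vacuously a partial transversal, which gives (I1). For (I2), if $X$ is saturated by a matching $N$ and $X' \subseteq X$, then deleting from $N$ the edges incident with the vertices of $X \setminus X'$ leaves a matching saturating $X'$, so every subset of a partial transversal is again a partial transversal.

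The substance of the proof is the exchange axiom (I3), and this is where I expect the main difficulty. Given partial transversals $I_1, I_2$ with $|I_1| < |I_2|$, I would choose matchings $M_1, M_2$ of sizes $|I_1|, |I_2|$ saturating exactly $I_1, I_2$ on the $U$-side, respectively. I would then examine the symmetric difference $H = M_1 \triangle M_2$: since each vertex meets at most one edge of each matching, every connected component of $H$ is a path or an even cycle whose edges alternate between $M_1$ and $M_2$. Cycles and even-length paths contain equally many edges of each matching, so because $|M_2| = |I_2| > |I_1| = |M_1|$ there must exist a component $P$ that is a path with strictly more $M_2$-edges than $M_1$-edges. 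Such a $P$ is an $M_1$-augmenting path: both of its end-edges belong to $M_2$, so both endpoints are $M_1$-unsaturated, and by bipartiteness they lie in opposite classes. Writing $e \in U$ for its $U$-endpoint, $e$ is $M_1$-unsaturated, hence $e \notin I_1$, while $e$ is $M_2$-saturated, hence $e \in I_2$, so $e \in I_2 \setminus I_1$.

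Finally I would augment along $P$, forming $M_1' = M_1 \triangle P$. Augmentation along an augmenting path alters the saturation status only at the two endpoints, which both become saturated, while every interior vertex remains saturated; hence the set of $U$-vertices saturated by $M_1'$ is exactly $I_1 \cup \{e\}$. This exhibits $I_1 \cup \{e\}$ as a partial transversal with $e \in I_2 \setminus I_1$, which is precisely (I3). The delicate points to handle with care are the counting argument that forces the existence of a path component favoring $M_2$, and the verification that $e$ is the only $U$-vertex whose saturation status changes under the augmentation, so that no member of $I_1$ is lost.
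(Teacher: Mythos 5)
This proposition is not proved in the paper at all: it is quoted verbatim from Oxley's \emph{Matroid Theory} as background (Proposition~\ref{P:Transversalmatroid} carries only the citation \cite{Oxley93Matroid}), so there is no in-paper proof to compare against. Your argument is correct and is essentially the standard textbook proof of this result (going back to Edmonds and Fulkerson): encode $\mathcal{F}$ as a bipartite graph so that partial transversals become matchable subsets of $U$, note (I1) and (I2) are immediate, and obtain the exchange axiom (I3) from the symmetric difference $M_{1}\triangle M_{2}$ of two matchings, whose components are alternating paths and even cycles, the size imbalance forcing an $M_{1}$-augmenting path whose $U$-endpoint supplies the required element of $I_{2}-I_{1}$. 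The two places you flag as delicate are indeed the only ones needing care, and both go through: a component with more $M_{2}$-edges than $M_{1}$-edges must be an odd path with both end-edges in $M_{2}$, its endpoints are $M_{1}$-unsaturated (an $M_{1}$-edge at an endpoint would either extend the component or clash with the $M_{2}$-end-edge), and augmenting along it changes the saturation status of no $U$-vertex other than the endpoint $e$, so the new matching saturates exactly $I_{1}\cup\{e\}$.
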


\begin{example}(Continued from Example \ref{E:example1})
$\mathcal{I}(\mathcal{F})=\{\{1\}, \{2\}, \{3\}, \{4\}, \{1, 2\}, \{1, 3\}, $ $\{1, 4\}, \{2, 3\}, \{2, 4\}, \{3, 4\}, \{1, 2, 3\}, \{1, 2, 4\}, \{1, 3, 4\}, \{2, 3, 4\}\}$.
\end{example}

%One of the most attractive features of matroid theory is the existence of a theory of duality.
%It provides us an approach to construct a new matroid.
%
%\begin{definition}(Dual matroid\cite{Lai01Matroid,Oxley93Matroid})
%Let $M = (U, \mathcal{I})$ be a matroid.
%We say the matroid, whose the collection of bases is $\{B \subseteq U: U - B \in \mathcal{B}(M)\}$, is the dual of $M$ and denoted as $M^{\ast}$.
%\end{definition}
%
%The collection of the circuits of matroid $M^{\ast}$ is denoted by $\mathcal{C}(M^{\ast})$.
%%%%%%%%%%%%%%%%%%%%%%%%%%%%%%%%%%%%%
\subsection{Geometric lattice}
%%%%%%%%%%%%%%%%%%%%%%%%%%%%%%%%%%%%%
A lattice is a poset $(\mathcal{L}, \leq)$ such that, for every pair of elements, the least upper bound and greatest lower bound of the pair exist.
Formally, if $x$ and $y$ are arbitrary elements of $\mathcal{L}$, then $\mathcal{L}$ contains elements $x \bigvee y$ and $x \bigwedge y$.
The element $a$ of $\mathcal{L}$ is an atom of lattice $(\mathcal{L}, \leq)$ if it satisfies the condition: $0 < a$ and there is no $x \in \mathcal{L}$ such that $0 < x < a$.
The element $a$ of $\mathcal{L}$ is a coatom of lattice $(\mathcal{L}, \leq)$ if it satisfies the condition: $a < 1$ and there is no $x \in \mathcal{L}$ such that $a < x < 1$.
The following lemma gives another definition of a geometric lattice from the viewpoint of matroids.
In fact, the set of all closed sets of a matroid, ordered by inclusion, is a geometric lattice.

\begin{proposition}~\cite{Oxley93Matroid}
A lattice $\mathcal{L}$ is a geometric lattice if and only if it is the lattice of closed sets of a matroid.
\end{proposition}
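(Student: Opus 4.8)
The plan is to prove both implications by passing between the combinatorial data of a matroid, governed by its rank and closure operators (Propositions \ref{P:Rankaxiom} and \ref{P:Closureaxiom}), and the order-theoretic data of the lattice. Throughout I take a geometric lattice to mean a finite lattice that is \emph{atomistic} (every element is a join of atoms) and \emph{(upper) semimodular}, where semimodularity is expressed through a height function $h$ by the inequality $h(x) + h(y) \geq h(x \vee y) + h(x \wedge y)$ together with $h$ increasing by exactly one along each covering relation. Since I may invoke the rank and closure axioms already recorded, the two directions reduce to translating one axiom system into the other.

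First I would prove the easier direction: the closed sets of a matroid $M = (U, \mathcal{I})$, ordered by inclusion, form a geometric lattice. I would begin by checking that the closed sets do form a lattice: using the monotonicity and idempotence parts of Proposition \ref{P:Closureaxiom}, the intersection of two closed sets is again closed, so the meet is intersection, while the join of $X$ and $Y$ is $cl_M(X \cup Y)$. Taking $h = r_M$ as the height function, condition (R3) of Proposition \ref{P:Rankaxiom} immediately yields the semimodular inequality, since $r_M(X \cup Y) = r_M(cl_M(X \cup Y)) = r_M(X \vee Y)$ and $X \cap Y = X \wedge Y$. Atomicity would follow by identifying the atoms with the rank-one flats $cl_M(\{x\})$ and showing that each closed set is the join of the atoms below it, which uses the exchange property, condition (4) of Proposition \ref{P:Closureaxiom}.

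For the converse, given a geometric lattice $\mathcal{L}$ I would build a matroid whose ground set is the set $A$ of atoms of $\mathcal{L}$. The natural rank function is $r(S) = h\!\left(\bigvee S\right)$ for $S \subseteq A$, where $h$ is the height in $\mathcal{L}$. Conditions (R1) and (R2) are straightforward: monotonicity of the join and of $h$ gives (R2), while semimodularity applied repeatedly shows the height of a join of $k$ atoms is at most $k$, giving (R1). Condition (R3) is exactly a restatement of the semimodular inequality for $\bigvee S$ and $\bigvee T$. By Proposition \ref{P:Rankaxiom} this defines a matroid $M$ on $A$. I would then verify that the map $x \mapsto \{a \in A : a \leq x\}$ is an order isomorphism from $\mathcal{L}$ onto the lattice of closed sets of $M$; surjectivity and injectivity rest on the atomistic hypothesis, which guarantees that each $x$ is recovered as $\bigvee\{a : a \leq x\}$ and that distinct elements have distinct sets of atoms below them.

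The main obstacle I anticipate is the faithful translation between matroid semimodularity and lattice semimodularity at the level of the covering relations, rather than merely at the level of the rank inequality. Concretely, the delicate point is showing that the closed sets of the constructed matroid $M$ are precisely the images of lattice elements and that no spurious flats appear; equivalently, that a subset $S \subseteq A$ is $cl_M$-closed if and only if $S = \{a : a \leq \bigvee S\}$. This requires using the atomistic and semimodular properties together, and is where the geometric-lattice hypotheses are genuinely needed, whereas the rank inequalities alone are comparatively routine.
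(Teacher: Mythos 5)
The paper does not prove this proposition at all: it is quoted from Oxley~\cite{Oxley93Matroid} and used as a black box (its only role is to justify that $(\mathcal{L}(M),\subseteq)$ is geometric so that atoms and coatoms can be read off via ranks and hyperplanes). So there is no in-paper proof to compare against, and your proposal must be judged on its own merits. As an outline it is the standard textbook argument, and it is correct. Forward direction: flats form a lattice with $X\wedge Y=X\cap Y$ and $X\vee Y=cl_M(X\cup Y)$, graded by $r_M$, with submodularity (R3) of Proposition~\ref{P:Rankaxiom} giving semimodularity and the rank-one flats serving as atoms. Converse: the matroid on the atom set $A$ with rank $r(S)=h\bigl(\bigvee S\bigr)$ works, and your identification of the crux --- that the $cl_M$-closed sets are exactly the sets $\{a\in A: a\le x\}$ --- is the right one; it is resolved by noting that $cl_M(S)=\{a\in A: h(\bigvee S\vee a)=h(\bigvee S)\}$ and that in a graded lattice this equality forces $a\le\bigvee S$, after which atomisticity gives the order isomorphism.

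Two places where your wording glosses a step that must actually be written out. First, in the forward direction you must verify that the flat lattice is graded by $r_M$, i.e.\ that a covering relation between flats raises rank by exactly one; this uses the exchange axiom (4) of Proposition~\ref{P:Closureaxiom} together with the fact (Lemma~\ref{L:aconditionfortwosetshavethesameclosedsets}) that $F\subseteq F'$ with $r_M(F)=r_M(F')$ forces $F=F'$, plus the observation that $cl_M(F\cup\{x\})=F'$ for any $x\in F'-F$ when $F'$ covers $F$. Second, in the converse, (R3) is not literally ``a restatement'' of lattice semimodularity, because $\bigvee(S\cap T)$ may lie strictly below $\bigl(\bigvee S\bigr)\wedge\bigl(\bigvee T\bigr)$: one needs the extra line $h\bigl(\bigvee(S\cap T)\bigr)\le h\bigl(\bigl(\bigvee S\bigr)\wedge\bigl(\bigvee T\bigr)\bigr)$ by monotonicity of $h$, and only then does semimodularity close the inequality. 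Neither point is a genuine gap --- both are routine --- but a complete proof must include them.
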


The above proposition indicates that $(\mathcal{L}(M), \subseteq )$ is a geometric lattice, where $\mathcal{L}(M)$ denotes the collection of all closed sets of matroid $M$.
The operations join and meet of the lattice are, respectively, defined as $X \bigwedge Y = X \bigcap Y$ and $X \bigvee Y = cl_{M}(X \bigcup Y)$ for all $X , Y \in \mathcal{L}(M)$.
Moreover, the height of any element of the lattice is equal to the rank of the element in $M$.
As we know, the atoms of a lattice are precisely the elements of height one.
Therefore, the collection of the atoms of the lattice is the family of the sets which are closed sets of matroid $M$ and have value $1$ as their ranks.

%%%%%%%%%%%%%%%%%%%%%%%%%%%%%%%%%%%%
\section{Geometric lattice structure of covering through matroids}
\label{S:Geometriclatticestructureofcoveringthroughmatroids}
%%%%%%%%%%%%%%%%%%%%%%%%%%%%%%%%%%%%

As we know, a collection of all the closed sets of a matroid, in the sense of inclusion, is a geometric lattice.
In this section, we convert a covering to a matroid through transversal matroids, then study the lattice of all the closed sets of the matroid.
By this way, we realize the purpose to construct a geometric lattice structure from a covering.

Let $U$ be a nonempty and finite set and $\mathcal{F}$ a collection of nonempty subsets of $U$.
As shown in Proposition \ref{P:Transversalmatroid}, $M(\mathcal{F})$ is the transversal matroid induced by $\mathcal{F}$ and we denote the geometric lattice of $\mathcal{F}$ by $(\mathcal{L}(M(\mathcal{F})), \subseteq )$.
When $\mathcal{F}$ is a covering $\mathcal{C}$, the geometric lattice corresponding to it is denoted by $(\mathcal{L}(M(\mathcal{C})), \subseteq)$.
For convenience, we substitute $x$ for $\{x\}$ in the following discussion.

%%%%%%%%%%%%%%%%%%%%%%%%%%%%%%%%%%%%%%%%%%%%%%
\subsection{Atoms of the geometric lattice structure induced by a covering}
\label{Sub:Atomsofgeometriclatticeofacovering}
%%%%%%%%%%%%%%%%%%%%%%%%%%%%%%%%%%%%%%%%%%%%%%
Atoms of a geometric lattice are elements that are minimal among the non-zero elements and can be used to express the lattice.
Therefore, atoms play an important role in the lattices.
In this subsection, we study the atoms of the geometric lattice structure induced by a covering.

A covering of universe of objects is the collection of some basic knowledge we master, therefore it is important to be studied in detail.
The following theorem provides some equivalence characterizations for a covering from the viewpoint of matroids.

\begin{lemma}~\cite{Oxley93Matroid}
\label{L:therankofasetanditscclosedset}
Let $M$ be a matroid of $U$ and $X \subseteq U$.
$r_{M}(X) = r_{M}(cl_{M}(X))$.
\end{lemma}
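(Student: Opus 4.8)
The plan is to prove the two inequalities $r_{M}(X) \leq r_{M}(cl_{M}(X))$ and $r_{M}(cl_{M}(X)) \leq r_{M}(X)$ separately, relying only on the rank axioms (R1)--(R3) of Proposition \ref{P:Rankaxiom} together with the defining property of the closure operator, namely that $a \in cl_{M}(X)$ exactly when $r_{M}(X \bigcup \{a\}) = r_{M}(X)$.

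First I would establish the easy inclusion $X \subseteq cl_{M}(X)$: for any $a \in X$ we have $X \bigcup \{a\} = X$, hence $r_{M}(X \bigcup \{a\}) = r_{M}(X)$, so $a \in cl_{M}(X)$ by definition. Monotonicity (R2) then yields $r_{M}(X) \leq r_{M}(cl_{M}(X))$ immediately.

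The substantive part is the reverse inequality. The difficulty is that the closure is defined pointwise -- each single element $a \in cl_{M}(X)$ satisfies $r_{M}(X \bigcup \{a\}) = r_{M}(X)$ -- whereas we must control the rank after adjoining all such elements at once. I would bridge this gap by an induction that adjoins the elements of $cl_{M}(X) - X = \{a_{1}, \ldots, a_{k}\}$ one at a time, showing $r_{M}(X \bigcup \{a_{1}, \ldots, a_{i}\}) = r_{M}(X)$ for each $i$. Writing $Y = X \bigcup \{a_{1}, \ldots, a_{i-1}\}$ and applying submodularity (R3) to the pair $Y$ and $X \bigcup \{a_{i}\}$ gives
$$r_{M}(X \bigcup \{a_{1}, \ldots, a_{i}\}) + r_{M}(X) \leq r_{M}(Y) + r_{M}(X \bigcup \{a_{i}\}),$$
where I have used that $Y \bigcap (X \bigcup \{a_{i}\}) = X$ since the $a_{j}$ are distinct, and that $Y \bigcup (X \bigcup \{a_{i}\}) = X \bigcup \{a_{1}, \ldots, a_{i}\}$. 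By the inductive hypothesis $r_{M}(Y) = r_{M}(X)$ and by the closure condition $r_{M}(X \bigcup \{a_{i}\}) = r_{M}(X)$, so the right-hand side equals $2 r_{M}(X)$, forcing $r_{M}(X \bigcup \{a_{1}, \ldots, a_{i}\}) \leq r_{M}(X)$; monotonicity supplies the matching lower bound, closing the induction. Taking $i = k$ gives $r_{M}(cl_{M}(X)) = r_{M}(X)$, as required.

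The main obstacle is precisely this passage from the single-element closure condition to the whole set, and the submodular inequality (R3) is the tool that makes it work; everything else is bookkeeping with monotonicity. (One could alternatively pick a maximal independent subset $B$ of $X$ and argue that it remains maximal inside $cl_{M}(X)$, but that route forces one to translate between ranks and independence, so the purely rank-theoretic induction above seems cleaner given the axioms already available.)
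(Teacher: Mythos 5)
Your proof is correct. Note first that the paper itself gives no proof of this lemma: it is quoted from Oxley's book as a known fact, so there is no internal argument to compare against; your proposal is judged on its own merits and against the standard textbook route. Both inequalities are handled soundly. The inclusion $X \subseteq cl_{M}(X)$ plus (R2) gives the easy direction, and your one-element-at-a-time induction is airtight: with $Y = X \bigcup \{a_{1}, \ldots, a_{i-1}\}$ and $Z = X \bigcup \{a_{i}\}$ you correctly have $Y \bigcap Z = X$ (since the $a_{j}$ lie outside $X$ and are distinct) and $Y \bigcup Z = X \bigcup \{a_{1}, \ldots, a_{i}\}$, so (R3) together with the inductive hypothesis and the closure condition $r_{M}(X \bigcup \{a_{i}\}) = r_{M}(X)$ forces the rank to stay put, and (R2) supplies the reverse bound. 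The standard proof (the one in Oxley, and the one your parenthetical sketches) instead takes a maximal independent subset $B \subseteq X$ and shows $B$ remains maximal independent in $cl_{M}(X)$: if some $y \in cl_{M}(X)$ had $B \bigcup \{y\}$ independent, then $r_{M}(X \bigcup \{y\}) \geq |B| + 1 > r_{M}(X)$, contradicting $y \in cl_{M}(X)$. That argument is shorter but switches between the rank and independence languages; yours stays entirely within the rank axioms (R1)--(R3), at the cost of the bookkeeping induction. Either is a complete and acceptable proof.
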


\begin{lemma}~\cite{Oxley93Matroid}
\label{L:aconditionfortwosetshavethesameclosedsets}
Let $M$ be a matroid of $U$ and $X, Y \subseteq U$.
If $X \subseteq Y$ and $r_{M}(X) = r_{M}(Y)$, then $cl_{M}(X) = cl_{M}(Y)$.
\end{lemma}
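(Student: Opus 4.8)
The plan is to establish the set equality $cl_M(X) = cl_M(Y)$ by proving the two inclusions $cl_M(X) \subseteq cl_M(Y)$ and $cl_M(Y) \subseteq cl_M(X)$ separately. The first inclusion is essentially free, while the second is where the rank hypothesis does its work, so I would organize the argument around that asymmetry.

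First I would dispatch the inclusion $cl_M(X) \subseteq cl_M(Y)$. This follows immediately from the hypothesis $X \subseteq Y$ together with the monotonicity of the closure operator, namely condition (2) of the closure axiom in Proposition \ref{P:Closureaxiom}. Note that this direction does not use the rank equality at all; it holds for any nested pair of subsets of $U$.

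The substance of the lemma lies in the reverse inclusion $cl_M(Y) \subseteq cl_M(X)$, and this is exactly where the hypothesis $r_M(X) = r_M(Y)$ enters, via a squeeze argument on the rank function. I would fix an arbitrary element $a \in cl_M(Y)$; by the definition of closure this means $r_M(Y \cup \{a\}) = r_M(Y)$. To conclude $a \in cl_M(X)$ it suffices to verify $r_M(X \cup \{a\}) = r_M(X)$. The key observation is the chain of inclusions $X \subseteq X \cup \{a\} \subseteq Y \cup \{a\}$, the last containment holding because $X \subseteq Y$. Applying monotonicity of the rank function (condition (R2) of Proposition \ref{P:Rankaxiom}) along this chain yields $r_M(X) \leq r_M(X \cup \{a\}) \leq r_M(Y \cup \{a\})$. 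Since $r_M(Y \cup \{a\}) = r_M(Y) = r_M(X)$ by the choice of $a$ and the hypothesis, the two outer terms coincide, forcing $r_M(X \cup \{a\}) = r_M(X)$, i.e., $a \in cl_M(X)$.

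I expect no genuine obstacle here. The only mild point of insight is recognizing that the intermediate set $X \cup \{a\}$ is the right object to insert into the rank inequality; once that sandwich is set up, the conclusion is purely mechanical. In particular, the argument relies only on monotonicity of rank and the definition of closure, and uses neither the submodularity inequality (R3) nor Lemma \ref{L:therankofasetanditscclosedset}.
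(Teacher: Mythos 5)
Your proof is correct. Note that the paper itself gives no argument for this lemma at all: it is stated as a quoted result from the matroid theory literature \cite{Oxley93Matroid}, so there is no internal proof to compare yours against. Your two-inclusion argument is the standard one and is fully self-contained relative to what the paper does state: the inclusion $cl_{M}(X) \subseteq cl_{M}(Y)$ is immediate from monotonicity of the closure operator, condition (2) of Proposition \ref{P:Closureaxiom}, and the reverse inclusion follows from the squeeze $r_{M}(X) \leq r_{M}(X \cup \{a\}) \leq r_{M}(Y \cup \{a\}) = r_{M}(Y) = r_{M}(X)$, which uses only (R2) of Proposition \ref{P:Rankaxiom} together with the paper's rank-based definition $cl_{M}(Z) = \{a \in U : r_{M}(Z) = r_{M}(Z \cup \{a\})\}$. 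Your closing remarks are also accurate: neither submodularity (R3) nor Lemma \ref{L:therankofasetanditscclosedset} is needed, and the rank hypothesis enters only in the second inclusion. One small observation: the first inclusion could alternatively be derived from ranks as well, but that route would require (R3), so invoking Proposition \ref{P:Closureaxiom}(2) as you do is the cleaner choice.
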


\begin{theorem}
\label{T:equivalentcharacteizationforcovering}
Let $\mathcal{F}$ be a family of nonempty subsets of $U$ and $\mathcal{F} \neq \emptyset$.
The following statements are equivalent.\\
(1): $\mathcal{F}$ is a covering of $U$.\\
(2): $cl_{\mathcal{F}}(\emptyset) = \emptyset$.\\
(3): $\{cl_{\mathcal{F}}(x): x \in U\}$ is a partition of $U$.\\
(4): $\{cl_{\mathcal{F}}(x): x \in U\}$ is the collection of the atoms of $(\mathcal{L}(M(\mathcal{F})), \subseteq)$.
\end{theorem}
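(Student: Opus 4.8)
The plan is to route all four statements through the single fact that $cl_{\mathcal{F}}(\emptyset)$ is exactly the set of loops of the transversal matroid $M(\mathcal{F})$, namely $cl_{\mathcal{F}}(\emptyset) = U - \bigcup\mathcal{F}$. First I would establish this identity. Since $r_{\mathcal{F}}(\emptyset)=0$, the definition of closure gives $cl_{\mathcal{F}}(\emptyset) = \{a \in U : r_{\mathcal{F}}(\{a\}) = 0\}$, and by the definition of a partial transversal a singleton $\{a\}$ is independent if and only if $a$ lies in some block of $\mathcal{F}$; hence $r_{\mathcal{F}}(\{a\})=0$ precisely when $a \notin \bigcup\mathcal{F}$. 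This immediately yields $(1)\Leftrightarrow(2)$: $\mathcal{F}$ covers $U$ iff $\bigcup\mathcal{F}=U$ iff there is no loop iff $cl_{\mathcal{F}}(\emptyset)=\emptyset$.

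Next I would prove $(2)\Rightarrow(3)$ and $(2)\Rightarrow(4)$ together, since both describe $\{cl_{\mathcal{F}}(x): x\in U\}$ under the no-loop hypothesis. Assuming $cl_{\mathcal{F}}(\emptyset)=\emptyset$, every $x$ is a non-loop, so $r_{\mathcal{F}}(\{x\})=1$ and each $cl_{\mathcal{F}}(x)$ is a closed set of rank $1$, that is, an atom of $(\mathcal{L}(M(\mathcal{F})),\subseteq)$. That every atom is realized this way follows from Lemma \ref{L:aconditionfortwosetshavethesameclosedsets}: if $A$ is a rank-$1$ flat and $x\in A$, then $cl_{\mathcal{F}}(x)\subseteq A$ with equal rank forces $cl_{\mathcal{F}}(x)=A$, which gives $(4)$. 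For $(3)$, each $x\in cl_{\mathcal{F}}(x)$, so the family covers $U$, and I would get disjointness from the lattice structure: the meet of two distinct atoms is the bottom element, and since meet is intersection and the bottom is $cl_{\mathcal{F}}(\emptyset)=\emptyset$, distinct members of $\{cl_{\mathcal{F}}(x):x\in U\}$ are disjoint. Equivalently, if $z\in cl_{\mathcal{F}}(x)\cap cl_{\mathcal{F}}(y)$ then $r_{\mathcal{F}}(\{x,z\})=r_{\mathcal{F}}(\{y,z\})=1$ forces $cl_{\mathcal{F}}(x)=cl_{\mathcal{F}}(z)=cl_{\mathcal{F}}(y)$.

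To close the equivalences I would prove the converses $(3)\Rightarrow(2)$ and $(4)\Rightarrow(2)$, each by contradiction against the existence of a loop. If there were a loop $z$, then $r_{\mathcal{F}}(\{z\})=0=r_{\mathcal{F}}(\emptyset)$ gives $cl_{\mathcal{F}}(z)=cl_{\mathcal{F}}(\emptyset)$ by Lemmas \ref{L:therankofasetanditscclosedset} and \ref{L:aconditionfortwosetshavethesameclosedsets}, and monotonicity of closure puts $z$ inside $cl_{\mathcal{F}}(w)$ for every $w$. Because $\mathcal{F}\neq\emptyset$ has only nonempty blocks, $\bigcup\mathcal{F}\neq\emptyset$, so some $w$ is a non-loop with $r_{\mathcal{F}}(cl_{\mathcal{F}}(w))=1\neq 0=r_{\mathcal{F}}(cl_{\mathcal{F}}(z))$; thus $z$ lies in two distinct members $cl_{\mathcal{F}}(z)\neq cl_{\mathcal{F}}(w)$, contradicting that $(3)$ is a partition. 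For $(4)$, the member $cl_{\mathcal{F}}(z)=cl_{\mathcal{F}}(\emptyset)$ has rank $0$ and so is not an atom, contradicting that the family consists exactly of atoms. Either contradiction forces $cl_{\mathcal{F}}(\emptyset)=\emptyset$.

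I expect the main obstacle to be the bookkeeping around loops: pinning down $cl_{\mathcal{F}}(\emptyset)$ as the complement of $\bigcup\mathcal{F}$ via the transversal structure, and then handling the degenerate direction carefully, since a single loop simultaneously breaks disjointness in $(3)$ and injects a rank-$0$ non-atom into $(4)$. The geometric-lattice viewpoint, that the meet of distinct atoms equals the zero element, is what makes the disjointness in $(3)$ clean; translating it faithfully into set intersection is the step I would state most precisely.
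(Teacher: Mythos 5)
Your proposal is correct, and it is organized differently from the paper's proof. The paper establishes the single cycle $(1)\Rightarrow(2)\Rightarrow(4)\Rightarrow(3)\Rightarrow(1)$, whereas you use $(2)$ as a hub: $(1)\Leftrightarrow(2)$, then $(2)\Rightarrow(3)$, $(2)\Rightarrow(4)$, and the converses $(3)\Rightarrow(2)$, $(4)\Rightarrow(2)$ by contradiction against a loop. The engine of your version is the identity $cl_{M(\mathcal{F})}(\emptyset)=U-\bigcup\mathcal{F}$ (the loops of the transversal matroid), which the paper never states explicitly; it makes $(1)\Leftrightarrow(2)$ immediate and lets both contradiction arguments run through the same object. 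The core technical steps coincide with the paper's: rank-one flats are atoms, every atom equals $cl_{M(\mathcal{F})}(x)$ for any $x$ in it via Lemmas \ref{L:therankofasetanditscclosedset} and \ref{L:aconditionfortwosetshavethesameclosedsets}, and a common point of two closures forces them to be equal. What your route buys is robustness in the degenerate case: the paper's $(3)\Rightarrow(1)$ step derives $cl_{M(\mathcal{F})}(\emptyset)=\emptyset$ by intersecting two \emph{distinct} members of the partition, an argument that is vacuous when $\{cl_{M(\mathcal{F})}(x):x\in U\}$ has a single block (e.g., $U=\{a,b\}$, $\mathcal{F}=\{\{a,b\}\}$, where the family is $\{U\}$); your loop-based contradiction covers that case, since $\mathcal{F}\neq\emptyset$ with nonempty blocks guarantees a non-loop $w$, and a loop $z$ would then lie in the two distinct members $cl_{M(\mathcal{F})}(z)\neq cl_{M(\mathcal{F})}(w)$ (for $(3)$) or contribute the rank-zero non-atom $cl_{M(\mathcal{F})}(\emptyset)$ to the family (for $(4)$). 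What the paper's cycle buys in exchange is economy, four implications rather than six, and its $(4)\Rightarrow(3)$ leg shows directly that the atom property alone forces disjointness, without first reducing to $(2)$.
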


\begin{proof}
``$(1) \Rightarrow (2)$": According to the definition of transversal matroids, any partial transversal is an independent set.
Since $\mathcal{F}$ is a covering, any single-point set is an independent set.
Therefore, $cl_{M(\mathcal{F})}(\emptyset)=\emptyset$.

``$(2) \Rightarrow (4)$": For all $x \in U$, $cl_{M(\mathcal{F})}(cl_{M(\mathcal{F})}(x)) = cl_{M(\mathcal{F})}(x)$, then $cl_{M(\mathcal{F})}(x) \in \mathcal{L}(M(\mathcal{F}))$.
Since $cl_{M(\mathcal{F})}(\emptyset) = \emptyset$, any single-point set is an independent set, that is, $\forall x \in U$, $r_{M(\mathcal{F})}(x) = 1$.
Utilizing Lemma \ref{L:therankofasetanditscclosedset}, we have $r_{M(\mathcal{F})}(cl_{M(\mathcal{F})}(x)) = r_{M(\mathcal{F})}(x) = 1$.
Thus, for all $x \in U$, $cl_{M(\mathcal{F})}(x)$ is an atom of the lattice $(\mathcal{L}(M(\mathcal{F})), \subseteq)$.
Conversely, if $A$ is an atom of the lattice $(\mathcal{L}(M(\mathcal{F})), \subseteq)$, then $cl_{M(\mathcal{F})}(A) = A$ and $r_{M(\mathcal{F})}(A) = 1$.
It is clear that $A \neq \emptyset$.
Pitch $x \in A$, then $r_{M(\mathcal{F})}(x) = 1 = r_{M(\mathcal{F})}(A)$.
Utilizing Lemma \ref{L:aconditionfortwosetshavethesameclosedsets}, we have $cl_{M(\mathcal{F})}(x) = cl_{M(\mathcal{F})}(A) = A$.
Therefore, we have proved the result.

``$(4) \Rightarrow (3)$":
We firstly prove: $\forall x, y \in U$, if $cl_{M(\mathcal{F})}(x) \bigcap cl_{M(\mathcal{F})}(y) \neq \emptyset$, then $cl_{M(\mathcal{F})}(x) = cl_{M(\mathcal{F})}(y)$.
We may as well suppose $z \in cl_{M(\mathcal{F})}(x) \bigcap cl_{M(\mathcal{F})}(y)$.
Then $cl_{M(\mathcal{F})}(\emptyset) \subseteq cl_{M(\mathcal{F})}(z) \subseteq cl_{M(\mathcal{F})}(x)$ and $cl_{M(\mathcal{F})}(\emptyset) \subseteq cl_{M(\mathcal{F})}(z) \subseteq cl_{M(\mathcal{F})}(y)$.
We conclude that $cl_{M(\mathcal{F})}(\emptyset) \neq cl_{M(\mathcal{F})}(z)$;
Otherwise, $r_{M(\mathcal{F})}(cl_{M(\mathcal{F})}(z))$ $=$ $r_{M(\mathcal{F})}(cl_{M(\mathcal{F})}(\emptyset))$ $=$ $r_{M(\mathcal{F})}(\emptyset) = 0$, which contradicts that $cl_{M(\mathcal{F})}(z)$ is an atom.
Thus $cl_{M(\mathcal{F})}(\emptyset) \subset cl_{M(\mathcal{F})}(z) \subseteq cl_{M(\mathcal{F})}(x)$ and $cl_{M(\mathcal{F})}(\emptyset) \subset cl_{M(\mathcal{F})}(z) \subseteq cl_{M(\mathcal{F})}(y)$.
According to the definition of atoms, we have $cl_{M(\mathcal{F})}(x) = cl_{M(\mathcal{F})}(z) = cl_{M(\mathcal{F})}(y)$.
$\forall x \in U$, $x \in cl_{M(\mathcal{F})}(x)$, then $cl_{M(\mathcal{F})}(x) \neq \emptyset$.
Moreover, $U = \bigcup_{x\in U} \{x\} \subseteq \bigcup_{x\in U} cl_{M(\mathcal{F})}(x) \subseteq \bigcup_{x \in U} U = U$, that is, $U = \bigcup_{x \in U} cl_{M(\mathcal{F})}(x)$.
Hence $\{cl_{M(\mathcal{F})}(x): x \in U\}$ is a partition of $U$.

``$(3) \Rightarrow (1)$":
Since $\{cl_{\mathcal{F}}(x): x \in U\}$ is a partition of $U$, for any distinct elements $cl_{M(\mathcal{F})}(u)$ and $cl_{M(\mathcal{F})}(v)$ of $\{cl_{M(\mathcal{F})}(x): x \in U\}$, $cl_{M(\mathcal{F})}(u) \bigcap cl_{M(\mathcal{F})}(v) = \emptyset$.
Thus $cl_{M(\mathcal{F})}(\emptyset) \subseteq cl_{M(\mathcal{F})}(u)$ and $cl_{M(\mathcal{F})}(\emptyset) \subseteq cl_{M(\mathcal{F})}(v)$.
Hence $cl_{M(\mathcal{F})}(\emptyset) \subseteq cl_{M(\mathcal{F})}(u) \bigcap cl_{M(\mathcal{F})}(v) = \emptyset$.
Combining with $\emptyset \subseteq cl_{M(\mathcal{F})}(\emptyset)$, we have $cl_{M(\mathcal{F})}(\emptyset) = \emptyset$.
Thus, for all $x \in U$, $x$ is an independent set, that is, there exists $F_{x} \in \mathcal{F}$ such that $x \in F_{x}$.
Hence, $U = \bigcup_{x\in U}\{x\} \subseteq \bigcup_{x\in U} F_{x} \subseteq \bigcup \mathcal{F} \subseteq U$.
Thus $\bigcup \mathcal{F} = U$.
Since $\mathcal{F}$ is a family of nonempty subsets of $U$ and $\mathcal{F} \neq \emptyset$, we know $\emptyset \notin F$.
Therefore, we have proved that $\mathcal{F}$ is a covering of $U$.
\end{proof}

Theorem \ref{T:equivalentcharacteizationforcovering} indicates that the closure of any single-point set is an atom of lattice $\mathcal{L}(M(\mathcal{C}))$.
Based on the fact, we obtain all the atoms of the lattice from covering $C$ directly.

\begin{definition}~\cite{HuangZhu12Geometric}
\label{D:atoms}
Let $\mathcal{C} = \{K_{1}, K_{2}, \cdots, K_{m}\}$ be a covering of a finite set $U = \{x_{1}, x_{2}, \cdots, x_{n}\}$.
We define\\
(i) $A = \{K_{i} - \bigcup_{j=1, j \neq i }^{m}K_{j}: K_{i} - \bigcup_{j=1,j\neq i }^{m} K_{j} \neq \emptyset, i\in \{1, 2, \cdots, m\}\}\\
~~~~~~~~~= \{A_{1}, A_{2}, \cdots, A_{s}\}$.\\
(ii) $B = U - \bigcup_{i=1}^{s} A_{i}$.
\end{definition}

\begin{remark}
For all $i\in \{1, 2, \cdots, s\}$ and $x\in A_{i}$, there exists only one block of covering $C$ such that $x$ belongs to it, and there exist at least two blocks of covering $C$ such that $y$ belongs to them for all $y\in B$.
\end{remark}

One example is provided to illustrate the above definition.

\begin{example}
\label{E:example3}
Let $U = \{1, 2, 3, 4, 5\}$ and $\mathcal{C} = \{K_{1}, K_{2}, K_{3}\}$, where $K_{1} = \{1, 3\}, K_{2}$ $ = \{2, 3\}, K_{3} = \{3, 4, 5\}$.
Then $K_{1} - K_{2} \bigcup K_{3} = \{1, 3\} - \{2, 3, 4, 5\} = \{1\}$,
Similarly, $K_{2} = \{2\}$ and $K_{3} = \{4, 5\}$.
Therefore $A = \{\{1\}, \{2\}, \{4, 5\}\}$ and $B = U - \bigcup A = U - \{1, 2, 4, 5\} = \{3\}$.
\end{example}

In fact, the closure of any singleton set of universe $U$ in matroid $M(\mathcal{C})$ is an element of $A$ or a single-point set of $B$.
In order to reveal the fact better, we need the two lemmas below.

\begin{lemma}
\label{L:therelationbetweencoveringblockandclosureofsingle-pointset}
Let $\mathcal{C}$ be a covering of $U$.
For all $x \in U$, there exists $K \in \mathcal{C}$ such that $cl_{M(\mathcal{C})}(x) \subseteq K$.
\end{lemma}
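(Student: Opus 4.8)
The plan is to unpack the matroid closure in terms of rank, and then translate the resulting rank condition into a combinatorial statement about the covering blocks, exploiting the fact (Proposition \ref{P:Transversalmatroid}) that the independent sets of $M(\mathcal{C})$ are exactly the partial transversals of $\mathcal{C}$.

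First I would fix a basepoint. Since $\mathcal{C}$ is a covering, every element of $U$ lies in some block, so each singleton $\{x\}$ is a transversal of a one-element subfamily and hence a partial transversal; thus $\{x\} \in \mathcal{I}(\mathcal{C})$ and $r_{M(\mathcal{C})}(x) = 1$. By the definition of closure, $a \in cl_{M(\mathcal{C})}(x)$ means $r_{M(\mathcal{C})}(\{x,a\}) = r_{M(\mathcal{C})}(x) = 1$. The case $a = x$ is trivial, so I focus on $a \neq x$: here $r_{M(\mathcal{C})}(\{x,a\}) = 1$ forces $\{x,a\}$ to be dependent, i.e.\ not a partial transversal of cardinality two.

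The core step is to interpret ``$\{x,a\}$ is not a partial transversal'' combinatorially. A two-element set is a partial transversal precisely when its two elements can be assigned to two distinct blocks; by a Hall-type separation argument (which for two points reduces to a short case analysis), this fails exactly when the family of blocks meeting $\{x,a\}$ has cardinality one. Since the covering property guarantees this family is nonempty, there is a single block $K$ with $x, a \in K$, and moreover neither $x$ nor $a$ lies in any other block. In particular $K$ is the \emph{unique} block containing $x$. Applying this to every $a \in cl_{M(\mathcal{C})}(x)$ with $a \neq x$, all the blocks so produced coincide with this unique $K$, whence every such $a$ lies in $K$; together with $x \in K$ this yields $cl_{M(\mathcal{C})}(x) \subseteq K$. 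If instead no such $a$ exists, then $cl_{M(\mathcal{C})}(x) = \{x\}$ and any block containing $x$ (one exists by the covering property) already works.

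I expect the main obstacle to be precisely this combinatorial translation: justifying cleanly that $r_{M(\mathcal{C})}(\{x,a\}) = 1$ is equivalent to $x$ and $a$ sharing a single common block that contains neither of them in any other block. This is where the transversal structure must be argued carefully, and it is also what guarantees that one block $K$ serves for all elements of the closure simultaneously, rather than a possibly different block for each element. The remaining pieces, namely the basepoint rank computation and the final assembly into $cl_{M(\mathcal{C})}(x) \subseteq K$, are routine.
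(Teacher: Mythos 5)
Your proof is correct, and it rests on the same underlying combinatorial fact as the paper's: in the transversal matroid $M(\mathcal{C})$, a two-element set $\{x,y\}$ is independent exactly when $x$ and $y$ can be assigned to two distinct blocks. The difference is in how that fact is deployed. The paper argues contrapositively and more directly: fix \emph{any} block $K$ containing $x$; for $y \notin K$, the covering property puts $y$ in some block $K' \neq K$, so $\{x,y\}$ is a partial transversal, hence independent, hence $y \notin cl_{M(\mathcal{C})}(x)$. This two-line argument yields a strictly stronger conclusion than the lemma asks for --- \emph{every} block containing $x$ contains $cl_{M(\mathcal{C})}(x)$ --- and needs no case analysis. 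Your route instead characterizes dependence of $\{x,a\}$ (unique common block, with $x$ and $a$ in no other block), extracts from it that the block containing $x$ is unique whenever $cl_{M(\mathcal{C})}(x) \neq \{x\}$, and then splits into cases according to whether the closure is a singleton. That is all sound, but the uniqueness content you prove along the way is essentially the paper's \emph{next} lemma (Lemma \ref{L:therelationbetweencoveringblockandclosureofsingle-pointsetunderacondition}, that a closure of size at least two lies in only one block), so your argument front-loads work the paper defers; conversely, your intermediate characterization would let you read off that second lemma almost for free. The only cost is that your proof, as stated, identifies one specific block that works rather than showing all blocks containing $x$ do.
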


\begin{proof}
For all $x \in U$, we take $K \in \mathcal{C}$ satisfies $x \in K$, then $cl_{M(\mathcal{C})}(x) \subseteq K$.
In fact, for all $y \notin K$, there exists $K^{'} \neq K$ such that $y \in K^{'}$ because $\mathcal{C}$ is a covering of $U$.
That implies that $\{x, y\}$ is an independent set because there exist $K, K^{'} \in \mathcal{C}$ such that $x \in K$ and $y \in K^{'}$.
Thus, $r_{M(\mathcal{C})}(\{x, y\}) = 2 \neq 1 = r_{M(\mathcal{C})}(x)$ which implies that $y \notin cl_{M(\mathcal{C})}(x)$.
Therefore, we prove the result.
\end{proof}

\begin{lemma}
\label{L:therelationbetweencoveringblockandclosureofsingle-pointsetunderacondition}
Let $\mathcal{C}$ be a covering of $U$.
For all $x \in U$, if $|cl_{M(\mathcal{C})}(x)| \geq 2$, then there exists only one block $K$ of $\mathcal{C}$ such that $cl_{M(\mathcal{C})}(x) \subseteq K$.
\end{lemma}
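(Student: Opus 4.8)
The plan is to take the existence of a block containing the closure for granted, since that is precisely the content of Lemma \ref{L:therelationbetweencoveringblockandclosureofsingle-pointset}, and to concentrate all the work on the uniqueness claim. So I would argue by contradiction: suppose there were two \emph{distinct} blocks $K_{1}, K_{2} \in \mathcal{C}$ with $cl_{M(\mathcal{C})}(x) \subseteq K_{1}$ and $cl_{M(\mathcal{C})}(x) \subseteq K_{2}$, and derive a contradiction with the hypothesis $|cl_{M(\mathcal{C})}(x)| \geq 2$.

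The first step is to use the size hypothesis to produce a witness. Since $|cl_{M(\mathcal{C})}(x)| \geq 2$ and $x \in cl_{M(\mathcal{C})}(x)$, there is some $y \in cl_{M(\mathcal{C})}(x)$ with $y \neq x$. Because both $x$ and $y$ lie in $cl_{M(\mathcal{C})}(x)$, which is contained in $K_{1} \bigcap K_{2}$, I have $x, y \in K_{1}$ and $x, y \in K_{2}$. The key move is then to assign $x$ to $K_{1}$ and $y$ to $K_{2}$: since $K_{1} \neq K_{2}$ and $x \neq y$, this exhibits $\{x, y\}$ as a transversal of the subfamily $\{K_{1}, K_{2}\}$, hence a partial transversal of $\mathcal{C}$, hence an independent set of $M(\mathcal{C})$ by Proposition \ref{P:Transversalmatroid}. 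Consequently $r_{M(\mathcal{C})}(\{x, y\}) = 2$.

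The contradiction then comes from the closure side. By the definition of the closure operator, $y \in cl_{M(\mathcal{C})}(x)$ means $r_{M(\mathcal{C})}(\{x, y\}) = r_{M(\mathcal{C})}(x)$; and since $\mathcal{C}$ is a covering, every singleton is a partial transversal and hence independent, so $r_{M(\mathcal{C})}(x) = 1$. Thus $r_{M(\mathcal{C})}(\{x, y\}) = 1$, contradicting the value $2$ obtained above. This rules out two distinct blocks and yields uniqueness.

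I expect the only genuine obstacle to be conceptual rather than computational: recognizing that the hypothesis $|cl_{M(\mathcal{C})}(x)| \geq 2$ is exactly what is needed to manufacture a system of distinct representatives for two \emph{distinct} blocks. Everything else is bookkeeping, namely confirming that a transversal of a subfamily counts as a partial transversal, that partial transversals are precisely the independent sets of $M(\mathcal{C})$, and that singletons have rank one under the covering assumption. An alternative route would be to first show directly that $|cl_{M(\mathcal{C})}(x)| \geq 2$ forces $x$ to belong to exactly one block of $\mathcal{C}$ (if $x$ lay in two blocks, one could always separate $x$ from any other point and collapse the closure to $\{x\}$), after which uniqueness is immediate; but the contradiction argument above is shorter and reuses the previous lemma for the existence part.
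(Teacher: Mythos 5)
Your proof is correct and follows essentially the same route as the paper's: both derive a contradiction by taking $y \neq x$ in $cl_{M(\mathcal{C})}(x)$, noting that $y \in cl_{M(\mathcal{C})}(x)$ forces $r_{M(\mathcal{C})}(\{x,y\}) = r_{M(\mathcal{C})}(x) = 1$, while two distinct blocks containing $cl_{M(\mathcal{C})}(x)$ make $\{x,y\}$ a partial transversal of rank $2$. The only difference is organizational — the paper first deduces $cl_{M(\mathcal{C})}(x) = \{x\}$ and then contradicts the size hypothesis, whereas you use the size hypothesis up front — but the underlying argument is identical.
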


\begin{proof}
According to Lemma \ref{L:therelationbetweencoveringblockandclosureofsingle-pointset}, we know there exists $K \in \mathcal{C}$ such that $cl_{M(\mathcal{C})}(x) \subseteq K$ for all $x \in U$.
Now, we need to prove the uniqueness of $K$.
Suppose there exists the other block $K^{'}$ such that $cl_{M(\mathcal{C})}(x) \subseteq K^{'}$.
We claim that $cl_{M(\mathcal{C})}(x) = \{x\}$; otherwise, there exists $y \neq x$ such that $y \in cl_{M(\mathcal{C})}(x)$ because we have had $x \in cl_{M(\mathcal{C})}(x)$.
That implies that $r_{M(\mathcal{F})}(\{x, y\}) = r_{M(\mathcal{C})}(x) = 1$.
However, there exist two blocks $K$ and $K^{'}$ such that $cl_{M(\mathcal{C})}(x)$ is contained in them.
Thus $r_{M(\mathcal{C})}(\{x,y\}) = 2$, which implies a contradiction!
Hence $cl_{M(\mathcal{C})}(x) = \{x\}$, i.e., $|cl_{M(\mathcal{C})}(x)| = 1$ which contradicts the assumption $|cl_{M(\mathcal{C})}(x)| \geq 2$.
\end{proof}

\begin{proposition}
\label{P:equivalencecharacterizationforatomsofclosurelattice}
Let $\mathcal{C}$ be a covering of $U$.
$\{cl_{M(\mathcal{C})}(x): x \in U\} = \{A_{1}, A_{2}, $ $\cdots, A_{s}\}$ $ \bigcup \{\{x\}: x \in B\}$.
\end{proposition}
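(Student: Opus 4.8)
The plan is to reduce the asserted set equality to two pointwise identities: (a) for every $x \in A_{i}$ one has $cl_{M(\mathcal{C})}(x) = A_{i}$, and (b) for every $x \in B$ one has $cl_{M(\mathcal{C})}(x) = \{x\}$. Since the points lying in exactly one block are collected in $A_{1} \bigcup \cdots \bigcup A_{s}$ while the points lying in at least two blocks form $B$ (the remark following Definition \ref{D:atoms}), the decomposition $U = A_{1} \bigcup \cdots \bigcup A_{s} \bigcup B$ is disjoint, and (a)--(b) deliver both inclusions simultaneously. Indeed, every $cl_{M(\mathcal{C})}(x)$ on the left-hand side is then either some $A_{i}$ or some singleton $\{x\}$ with $x \in B$; conversely each $A_{i}$ is nonempty by definition, so it is realized as $cl_{M(\mathcal{C})}(x)$ for any $x \in A_{i}$, and each $\{x\}$ with $x \in B$ is realized likewise.

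For claim (b) I would argue by contradiction. Fix $x \in B$; by the remark there are at least two blocks of $\mathcal{C}$ containing $x$. Suppose $|cl_{M(\mathcal{C})}(x)| \geq 2$. Then Lemma \ref{L:therelationbetweencoveringblockandclosureofsingle-pointsetunderacondition} supplies a \emph{unique} block $K$ with $cl_{M(\mathcal{C})}(x) \subseteq K$. However, the argument in the proof of Lemma \ref{L:therelationbetweencoveringblockandclosureofsingle-pointset} in fact yields $cl_{M(\mathcal{C})}(x) \subseteq K'$ for \emph{every} block $K'$ containing $x$; having at least two such blocks contradicts the uniqueness. Hence $|cl_{M(\mathcal{C})}(x)| = 1$, that is, $cl_{M(\mathcal{C})}(x) = \{x\}$.

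For claim (a), fix $x \in A_{i}$, so that $K_{i}$ is the only block containing $x$. The inclusion $cl_{M(\mathcal{C})}(x) \subseteq K_{i}$ is immediate from Lemma \ref{L:therelationbetweencoveringblockandclosureofsingle-pointset}. To get $A_{i} \subseteq cl_{M(\mathcal{C})}(x)$, take $y \in A_{i}$ with $y \neq x$; since $x$ and $y$ both lie only in $K_{i}$, no two distinct blocks can represent them separately, so $\{x, y\}$ is dependent and $r_{M(\mathcal{C})}(\{x, y\}) = 1 = r_{M(\mathcal{C})}(x)$, whence $y \in cl_{M(\mathcal{C})}(x)$. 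For the reverse inclusion, take $y \in cl_{M(\mathcal{C})}(x)$. Using $r_{M(\mathcal{C})}(x) = r_{M(\mathcal{C})}(y) = 1$ (all singletons are independent because $\mathcal{C}$ is a covering, by Theorem \ref{T:equivalentcharacteizationforcovering}) together with the rank lemmas (Lemmas \ref{L:therankofasetanditscclosedset} and \ref{L:aconditionfortwosetshavethesameclosedsets}), one obtains $cl_{M(\mathcal{C})}(y) = cl_{M(\mathcal{C})}(x)$; then $y \in cl_{M(\mathcal{C})}(x) \subseteq K_{i}$, and if $y$ belonged to another block $K_{j}$ the strong form of Lemma \ref{L:therelationbetweencoveringblockandclosureofsingle-pointset} would force $x \in cl_{M(\mathcal{C})}(y) \subseteq K_{j}$, contradicting that $x$ lies only in $K_{i}$. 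Hence $y \in A_{i}$, giving $cl_{M(\mathcal{C})}(x) = A_{i}$.

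The step I expect to be the main obstacle is the inclusion $cl_{M(\mathcal{C})}(x) \subseteq A_{i}$ in claim (a): knowing only that the closure sits inside the block $K_{i}$ is insufficient, and one must additionally rule out that a point of the closure secretly belongs to a second block. This is precisely where the uniqueness of the containing block (Lemma \ref{L:therelationbetweencoveringblockandclosureofsingle-pointsetunderacondition}) must be combined with the ``every block containing the point contains its closure'' reading of Lemma \ref{L:therelationbetweencoveringblockandclosureofsingle-pointset}. The degenerate case $|A_{i}| = 1$ (where $cl_{M(\mathcal{C})}(x) = \{x\}$ already equals $A_{i}$) needs no separate treatment, since the reverse inclusion above forces any extra point of the closure into $A_{i}$ and thereby into a contradiction with $|A_{i}| = 1$.
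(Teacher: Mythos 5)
Your proof is correct, and it rests on the same toolkit as the paper's—Lemmas \ref{L:therelationbetweencoveringblockandclosureofsingle-pointset} and \ref{L:therelationbetweencoveringblockandclosureofsingle-pointsetunderacondition}, the rank Lemmas \ref{L:therankofasetanditscclosedset} and \ref{L:aconditionfortwosetshavethesameclosedsets}, and the observation that two points lying only in one common block form a dependent pair—but it is organized along a different decomposition, and more cleanly. The paper proves the two inclusions separately: the forward inclusion by cases on $|cl_{M(\mathcal{C})}(x)|$ ($=1$ versus $\geq 2$) with sub-cases on how many blocks contain $x$, and then the backward inclusion by separate arguments for the $A_{i}$ and for the points of $B$; as a result the two pointwise facts at the heart of the matter are essentially established twice. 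You instead take the disjoint decomposition $U = A_{1} \bigcup \cdots \bigcup A_{s} \bigcup B$ as primary and prove the pointwise identities $cl_{M(\mathcal{C})}(x) = A_{i}$ for $x \in A_{i}$ and $cl_{M(\mathcal{C})}(x) = \{x\}$ for $x \in B$, from which both inclusions follow at once (nonemptiness of each $A_{i}$ guaranteeing that it is realized as a closure). This eliminates the duplication and makes the case structure track the combinatorial meaning of $A$ and $B$ rather than the a posteriori size of the closure. One point should be made explicit: in both claims you appeal to a \emph{strong form} of Lemma \ref{L:therelationbetweencoveringblockandclosureofsingle-pointset}—that \emph{every} block containing $x$ contains $cl_{M(\mathcal{C})}(x)$—which is what that lemma's proof establishes but not what its statement asserts. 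Either state and prove this strengthening separately (it is one line: if $x \in K$ and $y \notin K$, then $y$ lies in some block $K^{'} \neq K$ since $\mathcal{C}$ is a covering, so $\{x,y\}$ is a partial transversal, hence independent, hence $y \notin cl_{M(\mathcal{C})}(x)$), or replace those appeals by the dependence argument you already use in claim (a); with that done, the argument is complete.
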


\begin{proof}
For all $cl_{M(\mathcal{C})}(x) \in \{cl_{M(\mathcal{C})}(x): x \in U\}$, if $|cl_{M(\mathcal{C})}(x)| = 1$, then $cl_{M(\mathcal{C})}(x)$ $ = \{x\}$ because $x \in cl_{M(\mathcal{C})}(x)$.
Since $\mathcal{C}$ is a covering, there exists a block $K$ of $\mathcal{C}$ such that $x \in K$.
If $K$ is a unique block of $\mathcal{C}$ such that $x \in K$, then there exists $A_{i} \in A$ such that $x \in A_{i}$.
Moreover, $A_{i} = \{x\}$; otherwise, there exists $y \neq x$ such that $y \in A_{i}$.
According to the definition of $A_{i}$, we have $r_{M(\mathcal{C})}(\{x, y\}) = r_{M(\mathcal{C})}(x)$, i.e, $y \in cl_{M(\mathcal{C})}(x)$ which contradicts the assumption $|cl_{M(\mathcal{C})}(x)| = 1$.
Hence $cl_{M(\mathcal{C})}(x) = \{x\} = A_{i}$.
If $K$ is not a unique block of $\mathcal{C}$ such that $x \in K$, then $x \notin A_{i}$ for all $i \in \{1, 2, \cdots, s\}$.
That implies that  $x \in B$.
Therefore, $cl_{M(\mathcal{C})}(x) = \{x\}$, where $x \in B$.
If $|cl_{M(\mathcal{C})}(x)| \neq 1$, then $|cl_{M(\mathcal{C})}(x)| \geq 2$.
According to Lemma \ref{L:therelationbetweencoveringblockandclosureofsingle-pointsetunderacondition}, we know there exists only one block $K_{i}$ such that $cl_{M(\mathcal{C})}(x) \subseteq K_{i}$.
According to the definition of $A$, there exists $A_{j} \in A$ such that $x \in cl_{M(\mathcal{C})}(x) \subseteq A_{j}$.
$\forall y \in A_{j}$ and $y \neq x$, we know $r_{M(\mathcal{C})}(\{x, y\}) = r_{M(\mathcal{C})}(x)$.
Thus $y \in cl_{M(\mathcal{C})}(x)$, that is, $A_{j} \subseteq cl_{M(\mathcal{C})}(x)$, therefore, $A_{j} = cl_{M(\mathcal{C})}(x)$.
Form above discussion, we conclude that $\{cl_{M(\mathcal{C})}(x): x \in U \} \subseteq \{A_{1}, A_{2}, $ $\cdots, A_{s}\} \bigcup \{\{x\}: x \in B\}$.

Next, we prove $\{A_{1}, A_{2}, $ $\cdots, A_{s}\} \bigcup \{\{x\}: x \in B\} \subseteq \{cl_{M(\mathcal{C})}(x): x \in U \}$.
For all $A_{i} \in \{A_{1}, A_{2}, $ $\cdots, A_{s}\}$, we know there exists a unique block $K \in \mathcal{C}$ such that $A_{i} \subseteq K$.
Thus $r_{M(\mathcal{C})}(A_{i}) = 1$.
Pitch $y \in A_{i}$.
Since $\mathcal{C}$ is a covering, $r_{M(\mathcal{C})}(y) = 1$.
Thus $r_{M(\mathcal{C})}(A_{i}) = r_{M(\mathcal{C})}(y)$.
Utilizing Lemma \ref{L:aconditionfortwosetshavethesameclosedsets}, we have $cl_{M(\mathcal{C})}(y) = cl_{M(\mathcal{C})}(A_{i})$, which implies that $A_{i} \subseteq cl_{M(\mathcal{C})}(y)$.
For all $x \in cl_{M(\mathcal{C})}(y)$, $r_{M(\mathcal{C})}(\{x, y\}) = 1$, i.e., there exists a unique block $K$ of $\mathcal{C}$ such that $\{x,y\} \subseteq K$.
Thus $x \in A_{i}$.
Therefore $cl_{M(\mathcal{C})}(y) \subseteq A_{i}$ which implies that $\{A_{1}, A_{2}, $ $\cdots, A_{s}\} \subseteq \{cl_{M(\mathcal{C})}(x): x \in U \}$.
$\forall x \in B$, we claim that $cl_{M(\mathcal{C})}(x) = \{x\}$.
Since $\{x\} \subseteq cl_{M(\mathcal{C})}(x)$, we just need to prove $cl_{M(\mathcal{C})}(x) \subseteq \{x\}$; otherwise, there exists $y \in U$ and $y \neq x$ such that $y \in cl_{M(\mathcal{C})}(x)$.
Utilizing Lemma \ref{L:therelationbetweencoveringblockandclosureofsingle-pointsetunderacondition}, there is only one block $K$ of $\mathcal{C}$ such that $\{x, y\} \subseteq K$.
According to the definition of $A_{i}(i \in \{1, 2, \cdots, s\})$, we know there exists $A_{j}(j \in \{1, 2, \cdots, s\})$ such that $x \in A_{j}$, thus $x \notin B$ which contradicts the assumption that $x \in B$.
\end{proof}

The following result is the combination of Theorem \ref{T:equivalentcharacteizationforcovering} and Proposition \ref{P:equivalencecharacterizationforatomsofclosurelattice}.
It presents the atoms of lattice $(\mathcal{L}(M(\mathcal{C})), \subseteq)$ from covering $C$ directly.

\begin{corollary}
\label{C:concreteformofatiomofclosedsetlatticeofacovering}
Let $\mathcal{C}$ be a covering of $U$.
$\{A_{1}, A_{2}, \cdots, A_{s}\} \bigcup \{\{x\}: x\in B\}$ is the family of atoms of lattice $(\mathcal{L}(M(\mathcal{C})), \subseteq)$.
\end{corollary}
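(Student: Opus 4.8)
The plan is to obtain this corollary as a direct chaining of the two results just established, so the argument will be short: the substantive work has already been carried out in Theorem~\ref{T:equivalentcharacteizationforcovering} and Proposition~\ref{P:equivalencecharacterizationforatomsofclosurelattice}, and the corollary simply records their composition.

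First I would specialize Theorem~\ref{T:equivalentcharacteizationforcovering} to the family $\mathcal{F} = \mathcal{C}$. Since $\mathcal{C}$ is by hypothesis a covering of $U$, statement $(1)$ of that theorem holds, hence so does statement $(4)$: the family $\{cl_{M(\mathcal{C})}(x) : x \in U\}$ is exactly the collection of atoms of the lattice $(\mathcal{L}(M(\mathcal{C})), \subseteq)$. This is the step that identifies which closed sets are the atoms, and it rests on the fact that in the transversal matroid induced by a covering every singleton is independent, so each $cl_{M(\mathcal{C})}(x)$ is a closed set of rank one. Next I would invoke Proposition~\ref{P:equivalencecharacterizationforatomsofclosurelattice}, again valid because $\mathcal{C}$ is a covering, to rewrite this family in the explicit, covering-level form
\[
\{cl_{M(\mathcal{C})}(x) : x \in U\} = \{A_{1}, A_{2}, \cdots, A_{s}\} \bigcup \{\{x\} : x \in B\}.
\]
Chaining the two equalities then gives that $\{A_{1}, A_{2}, \cdots, A_{s}\} \bigcup \{\{x\} : x \in B\}$ is precisely the family of atoms of $(\mathcal{L}(M(\mathcal{C})), \subseteq)$, which is the assertion of the corollary.

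Because both ingredients are already proved, there is no genuine obstacle in the argument itself; the only point requiring care is to verify that the single hypothesis ``$\mathcal{C}$ is a covering'' is exactly what both the theorem and the proposition demand, so that no extra assumption is slipped in. In particular I would confirm that the sets $A_{1}, \cdots, A_{s}$ and $B$ of Definition~\ref{D:atoms} are well defined for the given covering, and that the right-hand side is read as a family of sets, so that the singletons $\{x\}$ with $x \in B$ are listed alongside the blocks $A_{i}$ rather than absorbed into them.
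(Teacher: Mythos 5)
Your proposal is correct and matches the paper exactly: the paper itself presents this corollary as "the combination of Theorem~\ref{T:equivalentcharacteizationforcovering} and Proposition~\ref{P:equivalencecharacterizationforatomsofclosurelattice}," which is precisely your chaining of the implication $(1)\Rightarrow(4)$ with the set equality of the proposition. Nothing further is needed.
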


Corollary \ref{C:concreteformofatiomofclosedsetlatticeofacovering} can also be found in \cite{HuangZhu12Geometric}.
It provides a method to obtain the atoms of the geometric lattice induced by a covering from the covering directly.
We obtain the result from the other different perspective in this paper.

\begin{example}(Continued from Example \ref{E:example3})
\label{E:exampel4}
Based on Corollary \ref{C:concreteformofatiomofclosedsetlatticeofacovering}, the collection of the atoms of lattice $(\mathcal{L}(M(\mathcal{C})), \subseteq)$ is $\{\{1\}, \{2\}, \{3\}, \{4, 5\}\}$.
\end{example}

%%%%%%%%%%%%%%%%%%%%%%%%%%%%%%%%%%%%%%%%%%%%%%
\subsection{Atoms characterization for the geometric lattice induced by a covering}
\label{sub:Atomscharacterizationforgeometriclatticeofacovering}
%%%%%%%%%%%%%%%%%%%%%%%%%%%%%%%%%%%%%%%%%%%%%%

In subsection \ref{Sub:Atomsofgeometriclatticeofacovering}, we have studied the atoms of the geometric lattice induced by a covering and have provided a method to obtain the atoms from the covering directly.
As we know, any element of a geometric lattice can be expressed as the joint of some atoms of the lattice.
In this subsection, we characterize the geometric lattice induced by a covering through the atoms of it by the union operation.
In fact, any element of the lattice can be indicated as the union of all closures of single-point sets in the element.
At the beginning of this subsection, we define two operators from the viewpoint of matroids.

\begin{definition}
Let $M$ be a matroid on $U$ and $X \subseteq U$.
One can define the following two operators:
\begin{center}
$L_{M}(X) = \{x \in U: cl_{M}(x) \subseteq X\}$,\\
$~~~~~~~H_{M}(X) = \{x \in U: cl_{M}(x) \bigcap X \neq \emptyset\}$.
\end{center}
We call the two operators are lower and upper approximation operators induced by $M$.
\end{definition}

In fact, $cl_{M}(x)$ can be regard as the successor neighborhood of $x$ with respect to the relation $R$ defined as $xR_{M}y \Leftrightarrow y \in cl_{M}(x)$.
It is clear that $R$ is a reflexive and transitive relation.
When $M = M(\mathcal{C})$, the relation $R_{M(\mathcal{C})}$ is an equivalence relation on $U$.
Therefore, $L_{M(\mathcal{C})} = (R_{M(\mathcal{C})})_{\ast}$ and $H_{M(\mathcal{C})} = (R_{M(\mathcal{C})})^{\ast}$.
In the following discussion, we study the relationship between the two operators and the elements of the lattice $\mathcal{L}(M(\mathcal{C}))$.
Then, based on the relationship, we realize the purpose to characterize the lattice through the atoms of it by using union operator.
Firstly, we have the following lemma.

\begin{lemma}
\label{L:upperdefinablesetandlowerdefinableaboutpartition}
Let $R$ be an equivalence relation of $U$.
For all $X \subseteq U$, if $R_{\ast}(X) = X$, then $R^{\ast}(X) = X$.
\end{lemma}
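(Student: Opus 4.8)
The plan is to use the inclusion $X \subseteq R^{\ast}(X)$, which holds for every subset, so that only the reverse containment $R^{\ast}(X) \subseteq X$ remains to be shown under the hypothesis $R_{\ast}(X) = X$. Reflexivity of $R$ gives $x \in [x]_{R}$ for each $x$, hence $[x]_{R} \bigcap X \neq \emptyset$ whenever $x \in X$, so $X \subseteq R^{\ast}(X)$ at once; thus the entire content of the lemma is the reverse containment.

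First I would unpack the hypothesis: $R_{\ast}(X) = X$ says that for every $x \in X$ the whole class $[x]_{R}$ lies in $X$, i.e. $X$ is a union of equivalence classes. Then I would take an arbitrary $y \in R^{\ast}(X)$ and, by the definition of the upper approximation, choose some $z \in [y]_{R} \bigcap X$. The key step is that the symmetry and transitivity of $R$ force $[z]_{R} = [y]_{R}$; since $z \in X = R_{\ast}(X)$ we have $[z]_{R} \subseteq X$, whence $[y]_{R} \subseteq X$, and in particular $y \in [y]_{R} \subseteq X$. This shows $R^{\ast}(X) \subseteq X$.

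Combining $R^{\ast}(X) \subseteq X$ with the trivial $X \subseteq R^{\ast}(X)$ yields $R^{\ast}(X) = X$, completing the argument. The proof is essentially a direct verification, so there is no serious obstacle; the only point demanding care is the class equality $[z]_{R} = [y]_{R}$, which is exactly where the equivalence-relation hypothesis (beyond mere reflexivity) is indispensable, as it is what lets one pass from ``$X$ meets the class of $y$'' to ``the class of $y$ is contained in $X$''. This is why the statement is phrased for an equivalence relation $R$, the relevant case here since $R_{M(\mathcal{C})}$ has already been shown to be an equivalence relation on $U$.
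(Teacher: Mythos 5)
Your proposal is correct and follows essentially the same argument as the paper: establish $X \subseteq R^{\ast}(X)$ trivially, then for $y \in R^{\ast}(X)$ pick $z \in [y]_{R} \bigcap X$ and use $[y]_{R} = [z]_{R} \subseteq X$ (the last inclusion from $z \in X = R_{\ast}(X)$) to conclude $y \in X$. The only cosmetic difference is that you justify the easy inclusion via reflexivity while the paper reads it off from $R_{\ast}(X) = X$ directly; the substance is identical.
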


\begin{proof}
It is clear that $X = \{x \in U: [x]_{R} \subseteq X\} \subseteq \{x \in U: [x]_{R} \bigcap X \neq \emptyset\} = R^{\ast}(X)$.
For all $y \in R^{\ast}(X)$, $[y]_{R} \bigcap X \neq \emptyset$.
Suppose $z \in [y]_{R} \bigcap X$.
Then $y \in [y]_{R} = [z]_{R} \subseteq X$, hence $R^{\ast}(X) \subseteq X$.
\end{proof}

In fact, any closed set of the matroid induced by a covering is a fixed point of the two operators
induced by the covering.

\begin{proposition}
\label{P:aexpressionofclosedset}
Let $\mathcal{C}$ be a covering of $U$.
If $X \in \mathcal{L}(M(\mathcal{C}))$, then $L_{M(\mathcal{C})}(X) = X = H_{M(\mathcal{C})}(X)$.
\end{proposition}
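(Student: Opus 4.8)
The plan is to split the double equality into its two halves, $L_{M(\mathcal{C})}(X) = X$ and $H_{M(\mathcal{C})}(X) = X$, and to handle them by quite different means: the lower equality follows purely from the closure axioms of Proposition \ref{P:Closureaxiom}, while the upper equality comes essentially for free once the lower one is established, by invoking the equivalence-relation machinery recorded in the paragraph just preceding the statement.

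First I would prove $L_{M(\mathcal{C})}(X) = X$ by two inclusions. For $L_{M(\mathcal{C})}(X) \subseteq X$, take $x \in L_{M(\mathcal{C})}(X)$; by definition $cl_{M(\mathcal{C})}(x) \subseteq X$, and since $x \in cl_{M(\mathcal{C})}(x)$ by condition (1) of Proposition \ref{P:Closureaxiom}, it follows that $x \in X$. For $X \subseteq L_{M(\mathcal{C})}(X)$, take $x \in X$; monotonicity of the closure, condition (2) of Proposition \ref{P:Closureaxiom} applied to $\{x\} \subseteq X$, gives $cl_{M(\mathcal{C})}(x) \subseteq cl_{M(\mathcal{C})}(X)$, and since $X$ is a closed set $cl_{M(\mathcal{C})}(X) = X$, so $cl_{M(\mathcal{C})}(x) \subseteq X$, that is, $x \in L_{M(\mathcal{C})}(X)$. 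This half uses nothing beyond the closure operator and the hypothesis that $X$ is closed.

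For the upper equality I would not argue from scratch but use the observation, already made in the paragraph preceding the statement, that the relation $x R_{M(\mathcal{C})} y \Leftrightarrow y \in cl_{M(\mathcal{C})}(x)$ is an equivalence relation when the underlying family is a covering, and that under this identification $L_{M(\mathcal{C})} = (R_{M(\mathcal{C})})_{\ast}$ and $H_{M(\mathcal{C})} = (R_{M(\mathcal{C})})^{\ast}$. Having just shown $(R_{M(\mathcal{C})})_{\ast}(X) = L_{M(\mathcal{C})}(X) = X$, Lemma \ref{L:upperdefinablesetandlowerdefinableaboutpartition} immediately yields $H_{M(\mathcal{C})}(X) = (R_{M(\mathcal{C})})^{\ast}(X) = X$, which finishes the proof.

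The step I expect to carry the real weight is the passage to the upper approximation, and the point to be careful about is that it genuinely requires the covering hypothesis: for a general matroid $M$ the induced relation $R_M$ is only reflexive and transitive, not symmetric, so Lemma \ref{L:upperdefinablesetandlowerdefinableaboutpartition} would not apply and $H_M(X) = X$ could fail even for closed $X$. Thus the essential content is not the chain of inclusions, which is routine, but the recognition that $\mathcal{C}$ being a covering upgrades $R_{M(\mathcal{C})}$ to an equivalence relation, which is precisely what licenses the use of Lemma \ref{L:upperdefinablesetandlowerdefinableaboutpartition}.
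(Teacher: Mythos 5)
Your proposal is correct and follows essentially the same route as the paper: both prove $L_{M(\mathcal{C})}(X)=X$ by two inclusions using extensivity and monotonicity of the closure operator together with $cl_{M(\mathcal{C})}(X)=X$, and both then obtain $H_{M(\mathcal{C})}(X)=X$ by identifying $L_{M(\mathcal{C})}$ and $H_{M(\mathcal{C})}$ with $(R_{M(\mathcal{C})})_{\ast}$ and $(R_{M(\mathcal{C})})^{\ast}$ for the equivalence relation $R_{M(\mathcal{C})}$ and applying Lemma \ref{L:upperdefinablesetandlowerdefinableaboutpartition}. Your closing remark that the covering hypothesis is what makes $R_{M(\mathcal{C})}$ an equivalence relation (and hence licenses the lemma) is a fair articulation of a point the paper leaves implicit in the paragraph preceding the statement.
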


\begin{proof}
Utilizing Lemma \ref{L:upperdefinablesetandlowerdefinableaboutpartition}, we need prove $X = L_{M(\mathcal{C})}(X)$. 
For all $y \in L_{M(\mathcal{C})}(X)$, $y \in cl_{M(\mathcal{C})}(y) \subseteq X$, thus $L_{M(\mathcal{C})}(X) \subseteq X$.
Conversely, according to (2) of Proposition \ref{P:Closureaxiom}, we know for all $y \in X$, $cl_{M(\mathcal{C})}(y)\subseteq X$.
Thus $X \subseteq L_{M(\mathcal{C})}(X)$.
Hence $X = L_{M(\mathcal{C})}(X)$.
\end{proof}

Based on the above result, any element of the geometric lattice induced by a covering can be expressed as the union of all closures of single-point sets in the element.

\begin{theorem}
Let $\mathcal{C}$ be a covering of $U$.
$ \forall X \in \mathcal{L}(M(\mathcal{C}))$, $X = \bigcup_{x \in X}cl_{M(\mathcal{C})}(x)$.
\end{theorem}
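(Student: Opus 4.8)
The plan is to establish the set equality $X = \bigcup_{x \in X} cl_{M(\mathcal{C})}(x)$ by proving the two inclusions separately, leaning on Proposition \ref{P:aexpressionofclosedset} for the substantive direction. First I would handle $X \subseteq \bigcup_{x \in X} cl_{M(\mathcal{C})}(x)$. This is immediate from extensivity of the closure operator: by condition (1) of Proposition \ref{P:Closureaxiom}, every $x \in X$ satisfies $x \in cl_{M(\mathcal{C})}(x)$, so each such $x$ lies in the union indexed over $X$. Since this holds for all $x \in X$, the inclusion follows.

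For the reverse inclusion $\bigcup_{x \in X} cl_{M(\mathcal{C})}(x) \subseteq X$, I would invoke Proposition \ref{P:aexpressionofclosedset}. Because $X \in \mathcal{L}(M(\mathcal{C}))$, that proposition gives $X = L_{M(\mathcal{C})}(X) = \{x \in U : cl_{M(\mathcal{C})}(x) \subseteq X\}$. Hence for every $x \in X$ we have $cl_{M(\mathcal{C})}(x) \subseteq X$, and taking the union of these closures over all $x \in X$ stays inside $X$. Combining the two inclusions yields the claimed equality.

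The genuine content of the argument has already been packaged into Proposition \ref{P:aexpressionofclosedset} (which in turn rests on Lemma \ref{L:upperdefinablesetandlowerdefinableaboutpartition} and the monotonicity of the closure operator), so what remains here is essentially bookkeeping. I therefore do not anticipate any real obstacle: the only point requiring care is recognizing that the identity $L_{M(\mathcal{C})}(X) = X$ is precisely the statement that membership of $x$ in $X$ is equivalent to the containment $cl_{M(\mathcal{C})}(x) \subseteq X$, which is exactly what drives the nontrivial inclusion.
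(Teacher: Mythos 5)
Your proposal is correct and follows essentially the same route as the paper: both rest on Proposition \ref{P:aexpressionofclosedset} (the identity $X = L_{M(\mathcal{C})}(X)$) to get $cl_{M(\mathcal{C})}(x) \subseteq X$ for each $x \in X$, and on extensivity of the closure operator for the other inclusion. The only cosmetic difference is that the paper phrases it as a sandwich $X = \bigcup_{x\in X}\{x\} \subseteq \bigcup_{x\in X} cl_{M(\mathcal{C})}(x) \subseteq X$ and notes the trivial case $X=\emptyset$ separately, which your two-inclusion argument covers implicitly.
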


\begin{proof}
It is obvious when $X = \emptyset$.
According to Proposition \ref{P:aexpressionofclosedset}, we have $X = \{x \in U: cl_{M(\mathcal{C})}(x) \subseteq X\}$.
Then $x \in cl_{M(\mathcal{C})}(x) \subseteq X$ for all $x \in X$.
Thus $X = \bigcup_{x \in X}\{x\} \subseteq \bigcup_{x \in X} cl_{M(\mathcal{C})}(x) \subseteq X$.
Therefore $X = \bigcup_{x \in X} cl_{M(\mathcal{C})}(x)$.
\end{proof}

\begin{example}
\label{E:example5}
Suppose $\mathcal{C}$ is the one shown in Example \ref{E:example3}.
Combining with Example \ref{E:exampel4} and Proposition \ref{P:equivalencecharacterizationforatomsofclosurelattice}, we have $cl_{M(\mathcal{C})}(\emptyset) = \emptyset, cl_{M(\mathcal{C})}(1)$ $ = \{1\}, cl_{M(\mathcal{C})}(2) = \{2\}, cl_{M(\mathcal{C})}(3) = \{3\}$ and $cl_{M(\mathcal{C})}(4) = cl_{M(\mathcal{C})}(5) $ $= \{4, 5\}$.
Since $X = \bigcup_{x \in X} cl_{M(\mathcal{C})}(x)$ for all $X \in \mathcal{L}(M(\mathcal{C}))$.
Then we obtain $\mathcal{L}(M(\mathcal{C})) = \{\emptyset, \{1\}, \{2\}, \{3\}, \{4,$ $ 5\}, \{1, 2\},$ $ \{1, 3\}, $ $\{1, 4, 5\}, $ $\{2, 3\},$ $ \{2, 4, 5\}, \{3, 4, 5\}, \{1, 2, 3, 4,$ $ 5\}\}$, and the geometric lattice $(\mathcal{L}(M(\mathcal{C})),$ $ \subseteq)$ is shown in Figure 1.
\begin{figure}[htp]
   \begin{center}
   \includegraphics[width=3.5in]{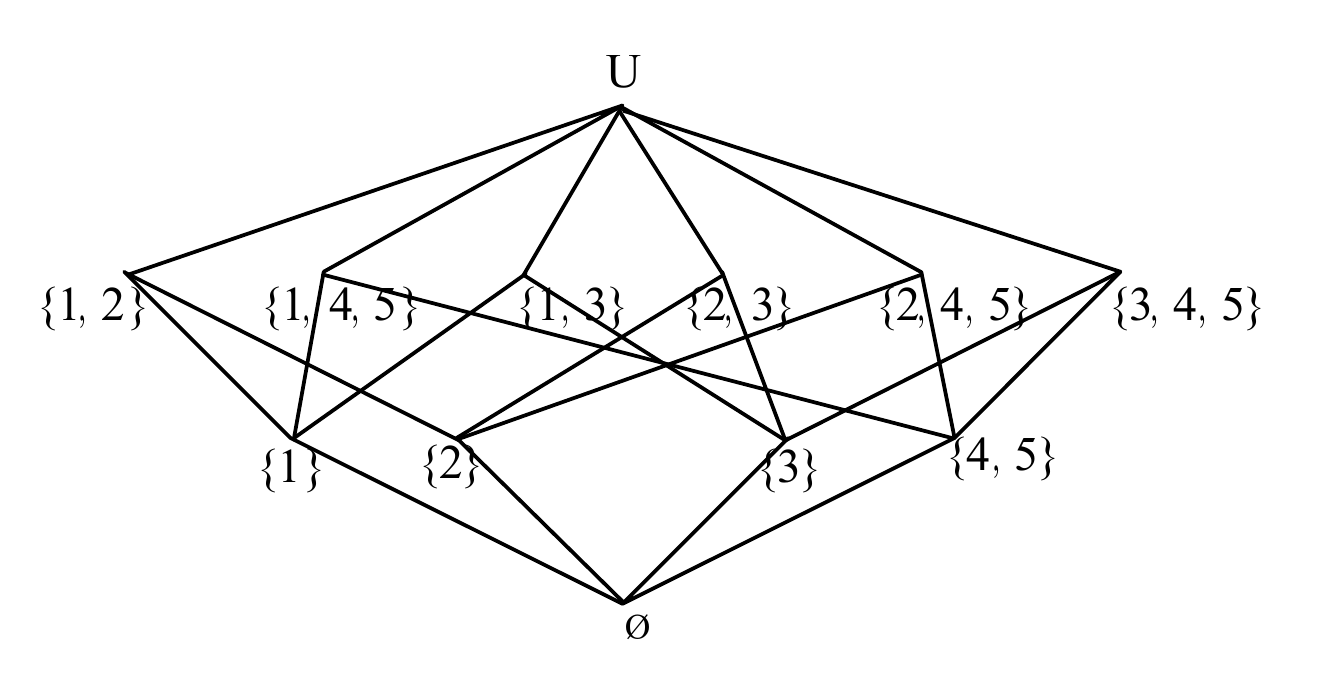}
   \caption{The geometric lattice of $(\mathcal{L}(M(\mathcal{C})), \subseteq)$}.
    \end{center}
\end{figure}
\end{example}

%%%%%%%%%%%%%%%%%%%%%%%%%%%%%%%%%%%%%%%%%%%%%%%%%%%%%%%%%%%%%%%%%%%%%%%%%%%%%%%%
\section{Application of geometric lattice in attribute reduction}
\label{S:Applicationofgeometriclatticeinattributereduction}
%%%%%%%%%%%%%%%%%%%%%%%%%%%%%%%%%%%%%%%%%%%%%%%%%%%%%%%%%%%%%%%%%%%%%%%%%%%%%%%%

In section \ref{S:Geometriclatticestructureofcoveringthroughmatroids}, we have studied the geometric lattice structure induced by a covering in detail.
In this section, we study how to apply the lattice to attribute reductions from an expanded perspective.
Considering the fact that an information system can be converted to a dependence space and studying the reduction issues of the dependence space is equal to studying the issues of the information system, and the fact that a geometric lattice is the lattice
of all the closed sets of a finite matroid, hence we take the following measures to
realize our purpose. First, we construct one dependence space through a matroid and obtain all the reducts of the space.
Second, we built the other dependence space from an information system.
Through making these two spaces are equal, we realize the purpose to apply geometric lattices to the issues of attribute reduction of information
systems.

%%%%%%%%%%%%%%%%%%%%%%%%%%%%%%%%%%%%%%%%%%%%%%%%%%%%%%%%%%%%%%%%%%%%%%%%%%%%%%%%
\subsection{Application of geometric lattice in the reduction issue of dependence space}
\label{Sub:Geometriclatticeandmatroidalapproachestoreductionissuesindependencespaces}
%%%%%%%%%%%%%%%%%%%%%%%%%%%%%%%%%%%%%%%%%%%%%%%%%%%%%%%%%%%%%%%%%%%%%%%%%%%%%%%%

In this subsection, we apply the geometric lattices to the reduction problems of dependence spaces.
First, we make certain what is dependence space.
The concept of dependence space can be found in \cite{ZhangLiangWu03Information}, the following lemma introduces it from the viewpoint of set theory.

\begin{lemma}\cite{ZhangLiangWu03Information}
\label{L:dependencespace}
Let $U$ be a finite nonempty set.
For all $\mathcal{T} \subseteq 2^{U}$, denote
\begin{center}
    $\Gamma(\mathcal{T}) = \{(B_{1}, B_{2}) \in P(U) \times P(U): B_{1} \subseteq X \Leftrightarrow B_{2} \subseteq X, \forall X \in \mathcal{T}\}$.
\end{center}
Then $(U, \Gamma(\mathcal{T}))$ is a dependence space.
\end{lemma}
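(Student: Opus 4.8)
The plan is to unwind the definition of a dependence space and check it directly. Recall (from \cite{ZhangLiangWu03Information}) that a \emph{dependence space} is a pair $(U, K)$ in which $U$ is a finite nonempty set and $K$ is a congruence on the join-semilattice $(2^{U}, \cup)$; that is, $K$ is an equivalence relation on $P(U)$ that is compatible with set union. Accordingly, to establish the lemma I would verify two things about $\Gamma(\mathcal{T})$: first that it is an equivalence relation on $P(U)$, and second that it respects unions, i.e. that $(A_{1}, A_{2}), (B_{1}, B_{2}) \in \Gamma(\mathcal{T})$ implies $(A_{1} \cup B_{1}, A_{2} \cup B_{2}) \in \Gamma(\mathcal{T})$.

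The first part is essentially free, because membership in $\Gamma(\mathcal{T})$ is defined by a universally quantified biconditional. Reflexivity holds since $B \subseteq X \Leftrightarrow B \subseteq X$ for every $X \in \mathcal{T}$; symmetry holds because the condition $B_{1} \subseteq X \Leftrightarrow B_{2} \subseteq X$ is symmetric in $B_{1}$ and $B_{2}$; and transitivity follows by chaining biconditionals, since if $B_{1} \subseteq X \Leftrightarrow B_{2} \subseteq X$ and $B_{2} \subseteq X \Leftrightarrow B_{3} \subseteq X$ hold for all $X \in \mathcal{T}$, then $B_{1} \subseteq X \Leftrightarrow B_{3} \subseteq X$ holds for all such $X$. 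I would spell these out in a line or two each.

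The heart of the argument, and the step I expect to be the main (if modest) obstacle, is compatibility with union. Here I would use the elementary but crucial fact that for any $A, B, X \subseteq U$ one has $A \cup B \subseteq X$ if and only if $A \subseteq X$ and $B \subseteq X$. Fixing $X \in \mathcal{T}$ and assuming $(A_{1}, A_{2}), (B_{1}, B_{2}) \in \Gamma(\mathcal{T})$, I would then chain the equivalences: $A_{1} \cup B_{1} \subseteq X$ iff ($A_{1} \subseteq X$ and $B_{1} \subseteq X$), iff ($A_{2} \subseteq X$ and $B_{2} \subseteq X$) by the two hypotheses, iff $A_{2} \cup B_{2} \subseteq X$. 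Since $X \in \mathcal{T}$ was arbitrary, this yields $(A_{1} \cup B_{1}, A_{2} \cup B_{2}) \in \Gamma(\mathcal{T})$, completing the congruence check. Combining the two parts shows that $\Gamma(\mathcal{T})$ is a $\cup$-compatible equivalence on $(2^{U}, \cup)$, whence $(U, \Gamma(\mathcal{T}))$ is a dependence space. The only genuine care needed is to confirm that the definition in \cite{ZhangLiangWu03Information} indeed takes ``dependence space'' to mean precisely a union-compatible equivalence relation on the power set, so that these two verifications suffice.
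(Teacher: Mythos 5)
Your proof is correct, but note that the paper itself offers no proof to compare against: Lemma~\ref{L:dependencespace} is simply quoted from \cite{ZhangLiangWu03Information} as a known result, so your write-up supplies a verification the authors chose to omit. Your argument is the standard one and it goes through: a dependence space in the sense of that reference is a pair $(U,K)$ with $U$ finite and nonempty and $K$ a congruence on the join-semilattice $(2^{U},\cup)$, i.e.\ a union-compatible equivalence relation, which is exactly the definition you recall. The equivalence-relation part is indeed immediate from the universally quantified biconditional, and the congruence part hinges precisely on the fact you isolate, namely that for all $A,B,X\subseteq U$ one has $A\cup B\subseteq X$ if and only if $A\subseteq X$ and $B\subseteq X$; chaining this with the two hypotheses $(A_{1},A_{2})\in\Gamma(\mathcal{T})$ and $(B_{1},B_{2})\in\Gamma(\mathcal{T})$ for each fixed $X\in\mathcal{T}$ gives exactly the required $(A_{1}\cup B_{1},A_{2}\cup B_{2})\in\Gamma(\mathcal{T})$. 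No use of finiteness of $U$ or any structure of $\mathcal{T}$ is needed, and your caveat about checking the reference's definition is the only point of caution; it is satisfied, so the proof is complete.
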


For a geometric lattice induced by a matroid, one can use its coatoms, namely the hyperplanes of the matroid to induce a dependence space $(U, \Gamma(\mathcal{H}(M)))$.
Before studying the reduction issues of the dependence space, we review the concepts of consistent sets and reducts defined in dependence spaces.

Let $(U, \Theta)$ be a dependence space.
A subset $B(\subseteq U)$ is called a consistent set, if $B$ is minimal with respect to inclusion in its $\Theta-$ class.
If $B$ is called a reduct of $U$, if $B \Theta U$ and $B$ is a consistent set.

In fact, the issue of reduction of dependence space $(U, \Gamma(\mathcal{T}))$ has been discussed detailedly in \cite{ZhangLiangWu03Information}.

\begin{lemma}\cite{ZhangLiangWu03Information}
\label{L:reductofandependencesapce}
$B(\subseteq U)$ is a reduct of $(U, \Gamma(\mathcal{T}))$ if and only if $B \in Min(\{D^{'} \subseteq U: D \bigcap D^{'} \neq \emptyset ~(\forall D \in \overline{\mathcal{T}})\})$, where $\overline{\mathcal{T}} = \{D \neq \emptyset, U - D \in \mathcal{T}\}$.
\end{lemma}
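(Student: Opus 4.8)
The plan is to unwind the definition of the relation $\Gamma(\mathcal{T})$ applied to the pair $(B, U)$ and to translate the resulting containment conditions into the intersection conditions on the right-hand side via complementation. Write $\mathcal{R} = \{D' \subseteq U : D \bigcap D' \neq \emptyset ~(\forall D \in \overline{\mathcal{T}})\}$ for the set appearing in the statement, so that the claim reads: $B$ is a reduct of $(U, \Gamma(\mathcal{T}))$ if and only if $B \in Min(\mathcal{R})$.

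First I would establish the key equivalence: $(B, U) \in \Gamma(\mathcal{T})$ if and only if $B \in \mathcal{R}$. By definition $(B, U) \in \Gamma(\mathcal{T})$ means $B \subseteq X \Leftrightarrow U \subseteq X$ for every $X \in \mathcal{T}$. Since $X \subseteq U$ always holds, $U \subseteq X$ is equivalent to $X = U$; hence the biconditional is vacuous for $X = U$ and reduces to the single requirement $B \not\subseteq X$ for every $X \in \mathcal{T}$ with $X \neq U$. The crucial observation is that complementation $X \mapsto U - X$ is a bijection between $\{X \in \mathcal{T} : X \neq U\}$ and $\overline{\mathcal{T}} = \{D : D \neq \emptyset, U - D \in \mathcal{T}\}$. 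Under this bijection, $B \not\subseteq X$ is equivalent to the existence of some $b \in B$ with $b \notin X$, i.e. $b \in U - X = D$, which is exactly $B \bigcap D \neq \emptyset$. Ranging over all such $X$ (equivalently all $D \in \overline{\mathcal{T}}$) yields $(B, U) \in \Gamma(\mathcal{T})$ iff $B \bigcap D \neq \emptyset$ for all $D \in \overline{\mathcal{T}}$, that is, $B \in \mathcal{R}$.

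Next I would exploit that $\Gamma(\mathcal{T})$ is patently an equivalence relation on $2^{U}$ (reflexivity, symmetry and transitivity are immediate from its definition) to identify $\mathcal{R}$ as an equivalence class. By the equivalence just proved, $\mathcal{R}$ is precisely the $\Gamma(\mathcal{T})$-class of $U$; consequently, for any $B \in \mathcal{R}$ the $\Gamma(\mathcal{T})$-class of $B$ coincides with $\mathcal{R}$. Recalling that $B$ is a reduct exactly when $B \Gamma(\mathcal{T}) U$ and $B$ is minimal with respect to inclusion in its own $\Gamma(\mathcal{T})$-class, I can combine the two facts: $B$ is a reduct iff $B \in \mathcal{R}$ and $B$ is minimal in $\mathcal{R}$, which is exactly $B \in Min(\mathcal{R})$. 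Both implications then follow at once, since membership $B \in \mathcal{R}$ forces the class of $B$ to equal $\mathcal{R}$, so that minimality in the class and minimality in $\mathcal{R}$ are the same requirement.

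The two definition-chasing steps are routine; the only point demanding genuine care is the complementation bijection in the first step, in particular verifying that the excluded case $X = U$ corresponds precisely to the restriction $D \neq \emptyset$ in the definition of $\overline{\mathcal{T}}$, and that the biconditional defining $\Gamma(\mathcal{T})$ collapses correctly to a one-sided non-containment. I expect no further obstacle, since the transfer of minimality in the final step is purely formal once $\mathcal{R}$ has been recognized as the $\Gamma(\mathcal{T})$-class of $U$.
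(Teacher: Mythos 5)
Your proof is correct. Note, however, that the paper itself gives no proof of this lemma: it is imported verbatim from the cited reference \cite{ZhangLiangWu03Information}, so there is no internal argument to compare yours against. Your two steps are exactly what is needed and both are sound: the complementation bijection between $\{X \in \mathcal{T} : X \neq U\}$ and $\overline{\mathcal{T}}$ correctly converts the condition ``$B \not\subseteq X$ for all non-full $X \in \mathcal{T}$'' into ``$B \bigcap D \neq \emptyset$ for all $D \in \overline{\mathcal{T}}$,'' and identifying $\mathcal{R}$ as the $\Gamma(\mathcal{T})$-class of $U$ makes the transfer of minimality (from ``minimal in its own class'' to ``minimal in $\mathcal{R}$'') purely formal, matching the paper's definition of reduct in a dependence space. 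Your argument thus supplies a self-contained verification of a fact the paper treats as a black box.
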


Therefore, we can obtain the following result.
It indicates that a subset of a finite nonempty set is a reduct of the dependence space induced by the coatoms of a geometric lattice if and only if it is a minimal set with respect to the property of containing an element from each nonempty complement of any coatom of the lattice.
The symbol $Com$ appearing in the proposition below is defined as $Com(\mathcal{A}) = \{X \subseteq U: U - X \in \mathcal{A}\}$, where $\mathcal{A}$ is a family subsets of $U$.

\begin{proposition}
\label{P:thereductofthedependencespaceinducedbyamatroid}
$B$ is a reduct of $(U, \Gamma(\mathcal{H}(M)))$ if and only if $B \in Min(\{B \subseteq U: B \bigcap C \neq \emptyset~(\forall C \in Com(\mathcal{H(M)}))\})$.
\end{proposition}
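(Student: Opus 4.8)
The plan is to apply Lemma~\ref{L:reductofandependencesapce} directly to the special case where $\mathcal{T} = \mathcal{H}(M)$, the collection of hyperplanes of the matroid $M$, and then to verify that the set system appearing in that lemma coincides, after unwinding the notation, with the system appearing in the proposition. By Lemma~\ref{L:reductofandependencesapce}, $B$ is a reduct of $(U, \Gamma(\mathcal{H}(M)))$ if and only if $B \in Min(\{D' \subseteq U: D \bigcap D' \neq \emptyset ~(\forall D \in \overline{\mathcal{H}(M)})\})$, where $\overline{\mathcal{H}(M)} = \{D \neq \emptyset: U - D \in \mathcal{H}(M)\}$. So the whole proposition reduces to the single identity $\overline{\mathcal{H}(M)} = Com(\mathcal{H}(M))$, or more precisely $\overline{\mathcal{H}(M)} = \{C: C \in Com(\mathcal{H}(M)), C \neq \emptyset\}$.

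The key step is therefore to unpack the two definitions. Recall $Com(\mathcal{H}(M)) = \{C \subseteq U: U - C \in \mathcal{H}(M)\}$, while $\overline{\mathcal{H}(M)} = \{D \subseteq U: D \neq \emptyset \text{ and } U - D \in \mathcal{H}(M)\}$. Reading these side by side, an element $D$ lies in $\overline{\mathcal{H}(M)}$ exactly when it lies in $Com(\mathcal{H}(M))$ and is additionally nonempty. First I would observe that the nonemptiness qualifier in $\overline{\mathcal{H}(M)}$ is automatically satisfied by every member of $Com(\mathcal{H}(M))$: if $C \in Com(\mathcal{H}(M))$ then $U - C$ is a hyperplane, hence $r_M(U-C) = r_M(U) - 1 < r_M(U)$, so $U - C \neq U$ (since the rank of $U$ is itself), which forces $C \neq \emptyset$. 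Thus the nonemptiness condition is vacuous here and the two families coincide exactly: $\overline{\mathcal{H}(M)} = Com(\mathcal{H}(M))$. Substituting this identity into the characterization from Lemma~\ref{L:reductofandependencesapce} yields precisely the stated minimality condition over $\{B \subseteq U: B \bigcap C \neq \emptyset ~(\forall C \in Com(\mathcal{H}(M)))\}$, which completes the argument.

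The only point requiring genuine care, and the place where I expect the main (albeit minor) obstacle, is justifying that every complement of a hyperplane is nonempty, so that no member of $Com(\mathcal{H}(M))$ is silently dropped when passing to $\overline{\mathcal{H}(M)}$. This rests on the defining property of a hyperplane, namely $r_M(H) = r_M(U) - 1$: were $H = U$ we would have $r_M(H) = r_M(U)$, a contradiction, so $H \subsetneq U$ and its complement is nonempty. Everything else is a direct substitution of one notational convention into another, so the proof is short once this single compatibility check is in place.
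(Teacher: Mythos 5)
Your proposal is correct and takes essentially the same approach as the paper: both apply Lemma~\ref{L:reductofandependencesapce} with $\mathcal{T} = \mathcal{H}(M)$ and reduce the claim to the observation that $\emptyset \notin Com(\mathcal{H}(M))$ (equivalently, $U \notin \mathcal{H}(M)$), which you justify from the hyperplane rank condition $r_{M}(H) = r_{M}(U) - 1$. The paper's proof also cites Proposition~\ref{P:theotherformoftherelationonmatroid}, but that reference is not essential to the argument, so your omission of it costs nothing.
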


\begin{proof}
According to the definition of hyperplane, we know $U \notin \mathcal{H}(M)$.
It implies that $\emptyset \notin Com(\mathcal{H(M)}))$.
Combining Proposition \ref{P:theotherformoftherelationonmatroid} and Lemma \ref{L:reductofandependencesapce}, we obtain the result.
\end{proof}

\begin{example}
Suppose lattice is the one shown in Example \ref{E:example5}.
Then the coatoms $\mathcal{H}(M)$ of the lattice is $\{\{1,2\}, \{1,3\}, $ $\{2,3\}, $ $\{1,4,5\}, \{2,4,5\}, \{3,4,5\}\}$ and $Com(\mathcal{H}(M))$ $ =\{\{3, 4, 5\}, \{2, 4,$ $ 5\}, \{1, 4, 5\}, \{2, 3\}, \{1, 3\}, \{1, 2\}\}$.
They are all nonempty sets.
According to Proposition \ref{P:thereductofthedependencespaceinducedbyamatroid}, the set of all the reducts of $(U, \mathcal{T}(\mathcal{H}(M)))$ is $\{\{1, 2, 3\},$ $ \{1, 2,$ $ 4\}, \{1, 2, 5\}, \{1, 3, 4\}, \{1, 3, 5\}, \{2, 3, 4\}, \{2, 3, 5\}\}$.
\end{example}

Considering that geometric lattices have a closed relation with matroids, we define the other dependence space from the viewpoint of matroids.
It is interesting that the dependence space is equal to the one $(U, \Gamma(\mathcal{H}(M)))$, which provides us the other approach to realize the purpose to apply the geometric lattice to attribute reduction in subsection \ref{S:Anapplicationtoinformationsystems}.

\begin{definition}
\label{D:anequialencerelationinducedbymatroid}
Let $M$ be a matroid on $U$.
One can define an equivalence relation on $2^{U}$ as follows: For all $B, C \subseteq U$,
\begin{center}
    $B \Theta_{M} C \Leftrightarrow cl_{M}(B) = cl_{M}(C)$.
\end{center}
\end{definition}

\begin{lemma}\cite{HuangZhu13Dependence}
\label{L:theotherclosureexpressionbyhyperplane}
Let $M$ be a matroid on $U$.
For all $X \subseteq U$,
\begin{align}
cl_{M}(X) = \left\{\begin{aligned}%
&U && \mbox r_{M}(X) = r_{M}(U),\\
&\bigcap \{H \in \mathcal{H}(M): X \subseteq H\} && \mbox r_{M}(X) \neq r_{M}(U).\\
\end{aligned}\right.
\end{align}
\end{lemma}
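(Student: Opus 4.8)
The plan is to prove the set equality in each of the two branches of the piecewise formula separately, in each branch establishing two inclusions. First I would dispose of the branch $r_{M}(X) = r_{M}(U)$, where I claim $cl_{M}(X) = U$ directly from the definition of the closure. For any $a \in U$, monotonicity of the rank function (condition (R2) of Proposition \ref{P:Rankaxiom}) applied to $X \subseteq X \bigcup \{a\} \subseteq U$ gives $r_{M}(X) \leq r_{M}(X \bigcup \{a\}) \leq r_{M}(U) = r_{M}(X)$, forcing $r_{M}(X \bigcup \{a\}) = r_{M}(X)$. By the definition of $cl_{M}$ this means $a \in cl_{M}(X)$, and since $a$ was arbitrary we get $cl_{M}(X) = U$.

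For the branch $r_{M}(X) \neq r_{M}(U)$, note first that (R2) always gives $r_{M}(X) \leq r_{M}(U)$, so this condition is the same as $r_{M}(X) < r_{M}(U)$. Write $F = cl_{M}(X)$; since a closed set $H$ contains $X$ if and only if it contains $cl_{M}(X) = F$, the hyperplanes in the intersection are exactly those containing $F$. The easy inclusion $F \subseteq \bigcap \{H \in \mathcal{H}(M): X \subseteq H\}$ is immediate: each such $H$ is closed, so monotonicity of the closure operator (condition (2) of Proposition \ref{P:Closureaxiom}) yields $cl_{M}(X) \subseteq cl_{M}(H) = H$, whence $F$ lies in the intersection.

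The hard part will be the reverse inclusion, and I would reduce it to the auxiliary fact that for the flat $F$ with $r_{M}(F) < r_{M}(U)$ and any $a \in U - F$ there is a hyperplane $H$ with $F \subseteq H$ and $a \notin H$. To prove this, take $H$ to be a flat of maximal rank among those satisfying $F \subseteq H$ and $a \notin H$; this collection is nonempty since $F$ itself qualifies. Because $a \notin H = cl_{M}(H)$, the rank axioms give $r_{M}(H \bigcup \{a\}) = r_{M}(H) + 1$. If $H$ were not a hyperplane then $r_{M}(H \bigcup \{a\}) < r_{M}(U)$, so there is some $b \notin cl_{M}(H \bigcup \{a\})$; then $b \notin H$, so $G = cl_{M}(H \bigcup \{b\})$ is a flat of rank $r_{M}(H) + 1$ with $F \subseteq H \subseteq G$ (using Lemma \ref{L:therankofasetanditscclosedset} for the rank of $G$). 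The exchange property (condition (4) of Proposition \ref{P:Closureaxiom}) shows $a \notin G$: were $a \in cl_{M}(H \bigcup \{b\})$ while $a \notin cl_{M}(H)$, it would follow that $b \in cl_{M}(H \bigcup \{a\})$, contradicting the choice of $b$. Thus $G$ contradicts the maximality of $H$, so $H$ is indeed a hyperplane.

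Granting this auxiliary fact, I would finish by contraposition: if $a \notin F$, the fact produces a hyperplane $H$ with $X \subseteq F \subseteq H$ and $a \notin H$, so $a$ is not in $\bigcap \{H \in \mathcal{H}(M): X \subseteq H\}$. Equivalently, every element of that intersection lies in $F = cl_{M}(X)$, which combined with the easy inclusion gives the desired equality (the same fact also guarantees the index set of the intersection is nonempty, so the intersection is not vacuously $U$). The delicate point throughout is the maximality-plus-exchange argument in the auxiliary fact; everything else is routine manipulation of the rank and closure axioms.
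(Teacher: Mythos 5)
Your proof is correct, but there is nothing in the paper to compare it against: the lemma is quoted from reference \cite{HuangZhu13Dependence} (the authors' companion paper on dependence spaces of matroids) and is given without proof here. What you have supplied is a complete, self-contained derivation using only the material in the paper's Preliminaries: the first branch follows from the rank axiom (R2) of Proposition \ref{P:Rankaxiom}, the easy inclusion from monotonicity and idempotence of the closure operator, and the hard inclusion from the standard matroid fact that every flat of non-full rank is an intersection of hyperplanes, which you establish by the maximal-flat-plus-exchange argument via axiom (4) of Proposition \ref{P:Closureaxiom}. That exchange step is applied correctly (with $X = H$, $x = b$, $y = a$). The one step you leave implicit is that your maximal flat $H$ satisfies $r_{M}(H) < r_{M}(U)$, which you need in order to conclude $r_{M}(H \cup \{a\}) < r_{M}(U)$ when $H$ is not a hyperplane; this follows immediately from your own first branch, since a closed set of full rank would equal $U$, contradicting $a \notin H$. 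With that observation spelled out, the argument is airtight, and it is essentially the classical proof (as in Oxley's book) that the cited reference relies on; your version has the additional merit of making the lemma verifiable within this paper rather than by citation.
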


\begin{proposition}
\label{P:theotherformoftherelationonmatroid}
Let $M$ be a matroid on $U$. $(U, \Theta_{M})$ is a dependence space and $\Gamma(\mathcal{H}(M))$ $ = \Theta_{M}$.
\end{proposition}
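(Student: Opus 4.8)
The plan is to establish the two assertions separately. First I would verify that $(U,\Theta_M)$ is a dependence space. By Definition~\ref{D:anequialencerelationinducedbymatroid}, $B\,\Theta_M\,C$ holds exactly when $cl_M(B)=cl_M(C)$; since $cl_M$ is a well-defined function $2^U\to 2^U$, the relation $\Theta_M$ is the kernel of this map, and the kernel of any function is automatically an equivalence relation (reflexive, symmetric, transitive). To conclude it is a \emph{dependence space}, I would appeal to the structural fact recorded in Lemma~\ref{L:dependencespace}: it suffices to exhibit $\Theta_M$ in the form $\Gamma(\mathcal{T})$ for some $\mathcal{T}\subseteq 2^U$. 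This motivates proving the equality $\Gamma(\mathcal{H}(M))=\Theta_M$ directly, since that single identity simultaneously delivers both claims of the proposition (taking $\mathcal{T}=\mathcal{H}(M)$ in Lemma~\ref{L:dependencespace} then certifies that $\Theta_M$ is a dependence space).

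The core of the argument is therefore the set-equality $\Gamma(\mathcal{H}(M))=\Theta_M$, which I would prove by a double inclusion after unwinding the definitions. Recall
\[
\Gamma(\mathcal{H}(M))=\{(B,C): B\subseteq H\Leftrightarrow C\subseteq H,\ \forall H\in\mathcal{H}(M)\},
\]
so $(B,C)\in\Gamma(\mathcal{H}(M))$ means $B$ and $C$ are contained in exactly the same hyperplanes of $M$. The key tool is Lemma~\ref{L:theotherclosureexpressionbyhyperplane}, which expresses $cl_M(X)$ as the intersection $\bigcap\{H\in\mathcal{H}(M): X\subseteq H\}$ whenever $r_M(X)\neq r_M(U)$, and as $U$ otherwise. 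For the inclusion $\Gamma(\mathcal{H}(M))\subseteq\Theta_M$, I would note that if $B$ and $C$ lie in precisely the same hyperplanes, then the two intersections defining $cl_M(B)$ and $cl_M(C)$ are taken over identical index families, forcing $cl_M(B)=cl_M(C)$; the degenerate case where no hyperplane contains $B$ (hence none contains $C$) is handled by the full-rank branch, giving $cl_M(B)=U=cl_M(C)$. For the reverse inclusion $\Theta_M\subseteq\Gamma(\mathcal{H}(M))$, I would use the fact that each hyperplane $H$ is itself a closed set, so $B\subseteq H$ iff $cl_M(B)\subseteq cl_M(H)=H$; thus the containments $B\subseteq H$ are completely determined by $cl_M(B)$, and $cl_M(B)=cl_M(C)$ immediately yields $B\subseteq H\Leftrightarrow C\subseteq H$ for every $H\in\mathcal{H}(M)$.

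The step I expect to require the most care is the equivalence ``$B\subseteq H\Leftrightarrow cl_M(B)\subseteq H$'' for a hyperplane $H$, together with the clean handling of the full-rank case where the family $\{H: B\subseteq H\}$ may be empty. The forward direction of that equivalence uses monotonicity and idempotence of $cl_M$ (conditions (2) and (3) of Proposition~\ref{P:Closureaxiom}) applied to the closed set $H$, while the backward direction is just $B\subseteq cl_M(B)$ from condition (1). I would make sure the two branches of Lemma~\ref{L:theotherclosureexpressionbyhyperplane} are matched consistently on both sides of each inclusion, since an asymmetry in the full-rank case (one of $B,C$ reaching rank $r_M(U)$ while the other does not) must be shown impossible: if $B$ and $C$ lie in the same hyperplanes and $B$ is contained in some hyperplane, then so is $C$, ruling out a rank mismatch. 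Once the equality $\Gamma(\mathcal{H}(M))=\Theta_M$ is secured, the dependence-space conclusion follows at once from Lemma~\ref{L:dependencespace}, completing the proof.
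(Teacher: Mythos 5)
Your proposal is correct and takes essentially the same approach as the paper: a double inclusion in which $\Gamma(\mathcal{H}(M))\subseteq\Theta_{M}$ follows from the hyperplane-intersection expression of the closure (Lemma \ref{L:theotherclosureexpressionbyhyperplane}), the reverse inclusion follows from hyperplanes being closed sets together with conditions (1)--(3) of Proposition \ref{P:Closureaxiom}, and the dependence-space claim then follows from Lemma \ref{L:dependencespace}. The only difference is that you treat the full-rank case explicitly, whereas the paper simply asserts $r_{M}(B_{1})\neq r_{M}(U)\neq r_{M}(B_{2})$ for any $(B_{1},B_{2})\in\Gamma(\mathcal{H}(M))$ --- a claim that fails, for instance, when $B_{1}=B_{2}=U$ --- so your explicit handling of the branch where neither set lies in any hyperplane actually closes a small gap in the published argument.
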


\begin{proof}
If $(B_{1}, B_{2}) \in \Gamma(\mathcal{H}(M))$, then $B_{1} \subseteq H \Leftrightarrow B_{2} \subseteq H$ for all $H \in \mathcal{H}(M)$.
We know $r_{M}(B_{1}) \neq  r_{M}(U) \neq r_{M}(B_{2}) $.
According to Lemma \ref{L:theotherclosureexpressionbyhyperplane}, we have $cl_{M}(B_{1}) = \bigcap \{H \in \mathcal{H}(M): B_{1} \subseteq H\} = \bigcap \{H \in \mathcal{H}(M): B_{2} \subseteq H\} = cl_{M}(B_{2})$.
Thus $(B_{1}, B_{2}) \in \Theta_{M}$ which implies that $\Gamma(\mathcal{H}(M)) \subseteq \Theta_{M}$.
If $(B_{1}, B_{2}) \in \Theta_{M}$, then $cl_{M}(B_{1}) = cl_{M}(B_{2})$.
For all $H \in \mathcal{H}(M)$, if $B_{1} \subseteq H$, then $cl_{M}(B_{1}) \subseteq cl_{M}(H) = H$.
thus $B_{2} \subseteq cl_{M}(B_{2}) = cl_{M}(B_{1}) \subseteq H$.
Similarly, we can prove the result: For all $H \in \mathcal{H}(M)$, if $B_{2} \subseteq H$, then $B_{1} \subseteq H$.
Therefore, $\Theta_{M} \subseteq \Gamma(\mathcal{H}(M))$.
According to Lemma \ref{L:dependencespace}, we know $(U, \Theta_{M})$ is a dependence space.
\end{proof}
%%%%%%%%%%%%%%%%%%%%%%%%%%%%%%%%%%%%%%%%%%%%%%%%%%%%%%%%%
\subsection{An application to information systems}
\label{S:Anapplicationtoinformationsystems}
%%%%%%%%%%%%%%%%%%%%%%%%%%%%%%%%%%%%%%%%%%%%%%%%%%%%%%%%%
In subsection \ref{Sub:Geometriclatticeandmatroidalapproachestoreductionissuesindependencespaces}, we propose two methods to solve the problems of reduction in dependence spaces from matroids and geometric lattices, respectively.
In this subsection, we apply the methods to information systems.
First, we introduce the concept of information systems.

\begin{definition}(Information system~\cite{ZhangLiangWu03Information})
An information system is a quadruple form $(U, A,$ $ F, V)$, where $U = \{x_{1}, x_{2}, \cdots, x_{n}\}$ is a nonempty finite set of objects, $A = \{a_{1}, a_{2},$ $ \cdots, a_{m}\}$ is a nonempty finite set of attributes, $V_{j} \in V = \{V_{1}, V_{2}, \cdots, V_{m}\}$ is the domain of attribute $a_{j}$ and $F = \{f_{j}: j \leq m\}$ is a set of information function such that $f_{j}(x_{i}) \in V_{j}$ for all $x_{i} \in U$.
\end{definition}

In an information system, $F$, which describes the connection between $U$ and $A$, is a basis for knowledge discovery.
Here, we assume the information system is complete.
Let $(U, A, F, V)$ be an information system.
For any $B \subseteq A$, the indiscernibility relation is defined as
\begin{center}
    $R_{B} = \{(x_{i}, x_{j}) \in U \times U: f_{l}(x_{i}) = f_{l}(x_{j}), \forall a_{l} \in B\}$.
\end{center}
Specifically, for any attribute $b \in A$,
\begin{center}
    $R_{b} = \{(x_{i}, x_{j}) \in U \times U:f_{b}(x_{i}) = f_{b}(x_{j})\}$.
\end{center}
It is obvious that $R_{B} = \bigcap_{b \in B}R_{b}$ and $R_{B}$, $R_{b}$ are equivalence relations of $U$.
Based on above two equivalence relation, we have the following two equivalence relations:
\begin{center}
    $R = \{(B_{1}, B_{2}) \in P(A) \times P(A): R_{B_{1}} = R_{B_{2}}\}$.\\
\end{center}
and
\begin{center}
$R_{0} = \{(b,c) \in A \times A, R_{b} = R_{c}\}$.
\end{center}

It was noted in \cite{ZhangLiangWu03Information} that $R$ is an equivalence relation on $A$ and the pair $(A, R)$ is a dependence space.
In an information system, $B$ is referred to as a consistent set if $R_{B} = R_{A}$,
and if $B$ is a consistent set and $R_{B - \{b\}} \neq R_{A}$ ($\forall b \in B$), then $B$ is referred to as a reduct of the information system.
We find that the reducts defined in the information system is the reducts defined in the dependence space $(A, R)$.
In the following discussion, we solve the issues of attribute reduction of information systems starting with the operator $R^{\ast}_{0}$.
As we know, the upper approximation operator $R^{\ast}_{0}$ is a closure operator of a matroid.
Similar to Definition \ref{D:anequialencerelationinducedbymatroid}, we have the following equivalence relation.

\begin{definition}
Let $A$ be a finite nonempty set.
For all $X, Y \subseteq A$, one can define an equivalence relation $\Theta$ of $2^{A}$ as follows:
\begin{center}
    $X\Theta Y \Leftrightarrow R^{\ast}_{0}(X) = R^{\ast}_{0}(Y)$.
\end{center}
\end{definition}

According to Proposition \ref{P:theotherformoftherelationonmatroid}, we know $(A, \Theta)$ is a dependence space,
and we can obtain all the reducts of the space through Proposition \ref{P:thereductofthedependencespaceinducedbyamatroid}.
Next, we want to find out all the reducts of $(A, R)$ with the aid of the space $(A, \Theta)$.
The proposition below establishes the relation between $\Theta$ and $R$.

\begin{proposition}
\label{P:relationbetwentwoequivalencerelation}
For all $X, Y \subseteq A$, if $R_{X} = R_{Y} \Rightarrow R^{\ast}_{0}(X) = R^{\ast}_{0}(Y)$, then $\Theta = R$.
\end{proposition}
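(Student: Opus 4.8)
The plan is to prove the two inclusions $R \subseteq \Theta$ and $\Theta \subseteq R$ separately. First I would unpack the definitions: $(X,Y) \in R$ means exactly $R_{X} = R_{Y}$, while $(X,Y) \in \Theta$ means exactly $R^{\ast}_{0}(X) = R^{\ast}_{0}(Y)$. Read this way, the hypothesis ``$R_{X} = R_{Y} \Rightarrow R^{\ast}_{0}(X) = R^{\ast}_{0}(Y)$ for all $X, Y$'' is literally the statement $R \subseteq \Theta$, so that inclusion is granted and requires no further work.

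The real content lies in the reverse inclusion $\Theta \subseteq R$, which I claim holds unconditionally. The key lemma I would isolate is $R_{X} = R_{R^{\ast}_{0}(X)}$ for every $X \subseteq A$. Granting this, if $R^{\ast}_{0}(X) = R^{\ast}_{0}(Y)$ then $R_{X} = R_{R^{\ast}_{0}(X)} = R_{R^{\ast}_{0}(Y)} = R_{Y}$, that is $(X,Y) \in R$, which is exactly what is needed. Hence $\Theta = R$ follows by combining this with the inclusion supplied by the hypothesis.

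To establish the key lemma I would use the two standing facts $R_{B} = \bigcap_{b \in B} R_{b}$ and $a \in R^{\ast}_{0}(X) \Leftrightarrow R_{a} = R_{c}$ for some $c \in X$. Since $R_{0}$ is reflexive we have $X \subseteq R^{\ast}_{0}(X)$, and because intersecting over a larger index set yields a smaller relation, this gives $R_{R^{\ast}_{0}(X)} \subseteq R_{X}$. For the opposite containment, take any $b \in R^{\ast}_{0}(X)$; then $R_{b} = R_{c}$ for some $c \in X$, and since $R_{X} \subseteq R_{c} = R_{b}$, intersecting over all such $b$ gives $R_{X} \subseteq R_{R^{\ast}_{0}(X)}$. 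The two containments yield the lemma.

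The step I expect to demand the most care is the equality $R_{X} = R_{R^{\ast}_{0}(X)}$, specifically the observation that adjoining to $X$ every attribute whose single-attribute indiscernibility relation already occurs among $\{R_{c}: c \in X\}$ leaves the intersection $\bigcap_{b} R_{b}$ unchanged. This is the conceptual reason why $R^{\ast}_{0}$ is the appropriate closure to compare against $R$, and it is the one place where the passage between the relation $R_{0}$ on individual attributes and the induced relation $R$ on attribute subsets genuinely enters.
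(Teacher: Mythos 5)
Your proof is correct, and while it uses the same overall decomposition as the paper --- reading the hypothesis as the inclusion $R \subseteq \Theta$ and showing that the reverse inclusion $\Theta \subseteq R$ holds unconditionally --- the way you establish $\Theta \subseteq R$ is genuinely different. The paper argues by contradiction at the level of objects: assuming $R_{X} \neq R_{Y}$, it picks a witness pair $(x_{i}, x_{j}) \in R_{X} - R_{Y}$ and an attribute $b \in Y$ with $(x_{i}, x_{j}) \notin R_{b}$, deduces that $R_{b} \neq R_{a}$ for every $a \in X$, hence $b \notin R^{\ast}_{0}(X)$ while $b \in R^{\ast}_{0}(Y)$, so the closures differ. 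You instead isolate the invariance lemma $R_{X} = R_{R^{\ast}_{0}(X)}$, proved by pure relation algebra: reflexivity of $R_{0}$ gives $X \subseteq R^{\ast}_{0}(X)$ and hence $R_{R^{\ast}_{0}(X)} \subseteq R_{X}$, while the definition of $R^{\ast}_{0}$ gives $R_{X} \subseteq R_{b}$ for each $b \in R^{\ast}_{0}(X)$ and hence the other containment; the inclusion $\Theta \subseteq R$ then follows in one line, without ever mentioning the objects of $U$. Both arguments are sound, and in fact the paper's contradiction argument is essentially the contrapositive of your lemma in disguise; but your formulation buys a cleaner, reusable statement --- it makes explicit that $R^{\ast}_{0}$ is precisely the closure that does not disturb the indiscernibility relation, which is the conceptual reason this whole construction works --- at the modest cost of introducing an auxiliary lemma, whereas the paper's witness-chasing proof is more elementary and self-contained but leaves that structural fact implicit.
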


\begin{proof}
We need to prove $R^{\ast}_{0}(X) = R^{\ast}_{0}(Y) \Rightarrow R_{X} = R_{Y}$.
If $R_{X} \neq R_{Y}$, then we may as well suppose there exists $(x_{i}, x_{j}) \in R_{X} - R_{Y}$.
Thus for all $a \in X$, $(x_{i}, x_{j}) \in R_{a}$ and there exists $b \in Y$ such that $(x_{i}, x_{j}) \notin R_{b}$.
Consequently, $R_{b} \neq R_{a}$ for all $a \in X$.
That implies that $b \notin R^{\ast}_{0}(X) =\{x \in A, [x]_{R_{0}} \bigcap X \neq \emptyset\}$.
It clear that $b \in R^{\ast}_{0}(Y)$ because $b \in Y$ and $b \in [b]_{R_{0}}$.
Therefore, $R^{\ast}_{0}(X) \neq R^{\ast}_{0}(Y)$, a contradiction!
Hence we have $R^{\ast}_{0}(X) = R^{\ast}_{0}(Y) \Rightarrow R_{X} = R_{Y}$.
According to the assumption, we have $R_{X} = R_{Y} \Rightarrow R^{\ast}_{0}(X) = R^{\ast}_{0}(Y)$.
Therefore $\Theta = R_{0}$.
\end{proof}

When an information system satisfies the condition presented in Proposition \ref{P:relationbetwentwoequivalencerelation}, then we can find and prove a method to attribute reduction of the information system.
The method is described as follows: Arbitrarily select an element in each $P_{i}(\in A/R_{0})$ to compose a new set, which is just the reduct of the information system.

\begin{proposition}
\label{reductofinformationsystem}
Let $(U, A, F, V)$ be an information system and $A/R_{0} = \{P_{1}, P_{2}, \cdots,\\ P_{s}\}$.
For all $X, Y \subseteq A$, if $R_{X} = R_{Y} \Rightarrow R^{\ast}_{0}(X) = R^{\ast}_{0}(Y)$, then the following condition holds:
$B$ is a reduct of $(U, A, F, V)$ if and only if $B = \{p_{1}, p_{2}, \cdots, p_{s}\}$, where $p_{i} \in P_{i}$, $1 \leq i \leq s$.
\end{proposition}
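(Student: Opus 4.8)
The plan is to reduce the claim to the matroidal dependence-space machinery already developed and then to carry out the explicit combinatorial computation for the matroid whose closure operator is $R^{\ast}_{0}$. The entry point is the hypothesis $R_{X} = R_{Y} \Rightarrow R^{\ast}_{0}(X) = R^{\ast}_{0}(Y)$: by Proposition~\ref{P:relationbetwentwoequivalencerelation} it forces $\Theta = R$. Since a reduct of the information system is by definition a reduct of the dependence space $(A, R)$, it suffices to describe the reducts of $(A, \Theta)$.

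Next I would identify the matroid sitting behind $\Theta$. Let $M$ denote the matroid on $A$ whose closure operator is $R^{\ast}_{0}$; then $\Theta$ coincides with the relation $\Theta_{M}$ of Definition~\ref{D:anequialencerelationinducedbymatroid}, so Proposition~\ref{P:theotherformoftherelationonmatroid} gives $\Theta = \Theta_{M} = \Gamma(\mathcal{H}(M))$. Feeding this into Proposition~\ref{P:thereductofthedependencespaceinducedbyamatroid}, the reducts of $(A, \Theta)$ are precisely $Min(\{B \subseteq A : B \cap C \neq \emptyset \ (\forall C \in Com(\mathcal{H}(M)))\})$, so the whole problem collapses to computing $Com(\mathcal{H}(M))$.

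The heart of the argument is the identity $Com(\mathcal{H}(M)) = A/R_{0} = \{P_{1}, \ldots, P_{s}\}$. To obtain it I would first observe that $R^{\ast}_{0}(X) = \bigcup \{P_{i} : P_{i} \cap X \neq \emptyset\}$, so the fixed points of $R^{\ast}_{0}$ --- that is, the closed sets of $M$ --- are exactly the unions of $R_{0}$-classes, and the rank of such a union equals the number of classes it contains. Hence $r_{M}(A) = s$, each $P_{i}$ is a rank-one flat, and the hyperplanes, being the closed sets of rank $s-1$, are exactly the sets $A - P_{i}$ for $1 \leq i \leq s$. Taking complements yields $Com(\mathcal{H}(M)) = \{P_{1}, \ldots, P_{s}\}$, and the reducts become the minimal members of $\{B \subseteq A : B \cap P_{i} \neq \emptyset \ (\forall i)\}$.

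It then remains to characterize these minimal hitting sets, which is where the disjointness of the classes does the work: a set $B$ meeting every $P_{i}$ and minimal with this property cannot miss any class, nor can it contain two elements of a single class, for deleting one of them would preserve the hitting property and contradict minimality; so $B$ must select exactly one representative from each $P_{i}$. Thus the reducts are precisely the transversals $B = \{p_{1}, \ldots, p_{s}\}$ with $p_{i} \in P_{i}$, which establishes both directions of the equivalence simultaneously. I expect the main obstacle to be the third step --- verifying that the closed sets of $M$ are exactly the unions of $R_{0}$-classes and that ranks add up class by class, so that the hyperplanes are exactly the $A - P_{i}$ --- since once $Com(\mathcal{H}(M))$ is pinned down the remainder is a routine chaining of the quoted propositions with the elementary combinatorics of transversals of a partition.
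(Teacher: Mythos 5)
Your proposal is correct and follows essentially the same route as the paper: use Proposition~\ref{P:relationbetwentwoequivalencerelation} to identify $\Theta$ with $R$, then chain Propositions~\ref{P:theotherformoftherelationonmatroid} and~\ref{P:thereductofthedependencespaceinducedbyamatroid} to reduce the problem to the minimal sets hitting every member of $Com(\mathcal{H}(M))$, where $M$ is the matroid with closure operator $R^{\ast}_{0}$, and finally identify $\mathcal{H}(M) = \{A - P_{i}: 1 \leq i \leq s\}$. The only difference is one of detail: the paper dismisses the hyperplane identification as ``clear'' and states the final transversal characterization without argument, whereas you justify both (closed sets of $M$ are unions of $R_{0}$-classes with rank equal to the number of classes, and disjointness of the $P_{i}$ forces minimal hitting sets to pick exactly one representative per class), which fills genuine gaps in the paper's terse write-up rather than changing the approach.
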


\begin{proof}
Suppose $R^{\ast}_{0}$ is a closure operator of matroid $M$.
It is clear that $\mathcal{H}(M) = \{A - P_{i} : i = 1, 2, \cdots, s\}$.
Combining with Proposition \ref{P:thereductofthedependencespaceinducedbyamatroid} and \ref{P:relationbetwentwoequivalencerelation}, we have $B$ is a reduct of $(U, A, F, V)$ if and only if $B \in Min(\{B \subseteq A: B \bigcap C \neq \emptyset(\forall C \in Com(\mathcal{H}(M)))\}) = Min(\{B \subseteq A: B \bigcap C \neq \emptyset (\forall C \in A/R_{0})\}) = \{B \subseteq A: |B \bigcap P_{i}| = 1, i = 1, 2, \cdots, s\}$.
\end{proof}

A relation table entirely determines an information system.
The following example presents how to use above results to find all the reducts of an information system.

\begin{example}
Let $(U, A, F, V)$ be an information system which is shown in table 1.
\begin{table}[h]
\caption{An information system}
\begin{center}
    \begin{tabular}{c|ccc}
  \hline
  % after \\: \hline or \cline{col1-col2} \cline{col3-col4} ...
  ~~~~~U &~~~~~~~~~$Outlook~(a_{1})$ &~~~~~~~~~ $Temperature~(a_{2})$ &~~~~~~~~~ $Humidity~(a_{3})$ ~~~~~\\
  \hline
  ~~~~$x_{1}$ &~~~~~~~~~ sunny &~~~~~~~~~ hot &~~~~~~~~~ high~~~ \\
  ~~~~$x_{2}$ &~~~~~~~~~ rain &~~~~~~~~~ mild &~~~~~~~~~ normal ~~~\\
  ~~~~$x_{3}$ &~~~~~~~~~ rain &~~~~~~~~~ cool &~~~~~~~~~ normal ~~~\\
  ~~~~$x_{4}$ &~~~~~~~~~ rain &~~~~~~~~~ hot &~~~~~~~~~ normal ~~~\\
  \hline
\end{tabular}
\end{center}
\end{table}
It is obvious that $U/R_{a_{1}} = U/R_{a_{3}} = U/R_{\{a_{1}, a_{3}\}} = \{\{x_{1}\}, \{x_{2}, x_{3}, x_{4}\}\}$, $U/R_{a_{2}} = \{\{x_{1}, x_{4}\}, \{x_{2}\}, \{x_{3}\}\}$ and $U/R_{\{a_{1}, a_{2}\}} = U/R_{\{a_{2}, a_{3}\}} = U/R_{\{a_{1}, a_{2}, a_{3}\}} = \{\{x_{1}\},$ $ \{x_{2}\}, \{x_{3}\}, \{x_{4}\}\}$.
Then $R_{0} = \{\{a_{1}, a_{3}\}, \{a_{2}\}\}$.
It is easy to check that any two subsets $X$ and $Y$ of $A$ satisfy the condition: $R_{X} = R_{Y} \Rightarrow R^{\ast}_{0}(X) = R^{\ast}_{0}(X)$.
Hence we can obtain all the reducts of $(U, A, F, V)$ are $\{a_{1}, a_{2}\}$ and $\{a_{2}, a_{3}\}$.
\end{example}
%%%%%%%%%%%%%%%%%%%%%%%%%%%%%%%%%%%%%%%%%%%%%%%%%%%%%%%%%%%%%%%%%%%%%%%%%%%%%%%
%%%%%%%%%%%%%%%%%%%%%%%%%%%%%%%%%%%%%%%%%%%%%%%%%%%%%%%%%%%%%%%%%%%%%%%%%%%%%%%
\section{Conclusions}
\label{S:conclusions}
In this paper, we have constructed a geometric lattice from a covering through the transversal matroid induced by the covering, and have used atoms of the lattice to characterize the lattice.
Furthermore, we have applied the lattice to the attribute reduction issues of information systems.
Though some works have been studied in this paper, there are also many interesting topics deserving further investigation.
In the future, we will study algorithm implementations of the attribute reduction issues in information systems through geometric lattices.

%%%%%%%%%%%%%%%%%%%%%%%%%%%%%%%%%%%%%%%%%%%%%%%%%%%%%%%%%%%%%%%%%%%%%%%%%%%%%%%
%%%%%%%%%%%%%%%%%%%%%%%%%%%%%%%%%%%%%%%%%%%%%%%%%%%%%%%%%%%%%%%%%%%%%%%%%%%%%%%
\section{Acknowledgments}
This work is supported in part by the National Natural Science Foundation of China under Grant Nos. 61170128, 61379049, and 61379089, 
the Natural Science Foundation of Fujian Province, China, under Grant No. 2012J01294,the Science and Technology Key Project of Fujian Province, 
China, under Grant No. 2012H0043, and the Zhangzhou Research Fund under Grant No. Z2011001.
%%%%%%%%%%%%%%%%%%%%%%%%%%%%%%%%%%%%%%%%%%%%%%%%%%%%%%%%%%%%%%%%%%%%%%%%%%%%%%%
%%%%%%%%%%%%%%%%%%%%%%%%%%%%%%%%%%%%%%%%%%%%%%%%%%%%%%%%%%%%%%%%%%%%%%%%%%%%%%%

%\bibliographystyle{model1b-num-names}
%\bibliography{Roughsets}

\end{document}